
\typeout{IJCAI--ECAI 26 Instructions for Authors}


\documentclass{article}
\pdfpagewidth=8.5in
\pdfpageheight=11in

\usepackage{ijcai26}

\usepackage{times}
\usepackage{soul}
\usepackage{url}
\usepackage[hidelinks]{hyperref}
\usepackage[utf8]{inputenc}
\usepackage[small]{caption}
\usepackage{graphicx}
\usepackage{amsmath}
\usepackage{amsthm}
\usepackage{booktabs}
\usepackage{algorithm}
\usepackage{algorithmic}
\usepackage[switch]{lineno}
\usepackage{multirow}
\usepackage{pifont}
\usepackage{amssymb}


\urlstyle{same}



\newtheorem{theorem}{Theorem}
\newtheorem{remark}{Remark}




\pdfinfo{
/TemplateVersion (IJCAI.2026.0)
}

\title{LEGATO: Good Identity Unlearning Is Continuous}


\author{
Qiang Chen$^{1,3}$\thanks{Equal contribution}
\and
Chun-Wun Cheng$^{2}$\footnotemark[1]
\and
Xiu Su$^3$\thanks{Corresponding author}
\and
Hongyan Xu$^3$
\and
\\
Xi Lin$^4$
\and
Shan You$^5$ 
\and
Angelica I. Aviles-Rivero$^6$
\and
Yi Chen$^1$\footnotemark[2]
\\
\affiliations
$^1$HKUST\\
$^2$University of Cambridge\\
$^3$Central South University\\
$^4$Shanghai Jiaotong University\\
$^5$SenseTime Research\\
$^6$Tsinghua University\\
\emails
qiangchen.sh@gmail.com,
cwc56@cam.ac.uk,
\{xiusu1994, hongyanxu\}@csu.edu.cn,
linxi234@sjtu.edu.cn,
youshan@senseauto.com,
aviles-rivero@tsinghua.edu.cn,
yichen@ust.hk
}

\begin{document}

\maketitle

\begin{abstract}
Machine unlearning has become a crucial role in enabling generative models trained on large datasets to remove sensitive, private, or copyright-protected data. However, existing machine unlearning methods face three challenges in learning to forget identity of generative models: 1) inefficient, where identity erasure requires fine-tuning all the model's parameters; 2) limited controllability, where forgetting intensity cannot be controlled and explainability is lacking; 3) catastrophic collapse, where the model's retention capability undergoes drastic degradation as forgetting progresses. Forgetting has typically been handled through discrete and unstable updates, often requiring full-model fine-tuning and leading to catastrophic collapse. \textbf{In this work, we argue that identity forgetting should be modeled as a continuous trajectory}, and introduce LEGATO —  \textbf{L}earn to Forg\textbf{E}t Identity in \textbf{G}ener\textbf{A}tive Models via \textbf{T}rajectory-consistent Neural \textbf{O}rdinary Differential Equations. LEGATO augments pre-trained generators with fine-tunable lightweight Neural ODE adapters, enabling smooth, controllable forgetting while keeping the original model weights frozen. This formulation allows forgetting intensity to be precisely modulated via ODE step size, offering interpretability and robustness. To further ensure stability, we introduce trajectory consistency constraints that explicitly prevent catastrophic collapse during unlearning. Extensive experiments across in-domain and out-of-domain identity unlearning benchmarks show that LEGATO achieves state-of-the-art forgetting performance, avoids catastrophic collapse and reduces fine-tuned parameters. Codes are available at https://github.com/sh-qiangchen/LEGATO.
\end{abstract}

\section{Introduction}
Recently, deep generative models \cite{rezende2014stochastic,goodfellow2014generative,karras2019style,karras2020analyzing,ho2020denoising,song2021score,rombach2022high} pre-trained on massive datasets have attracted widespread attention due to their excellent generation capabilities. However, this capability raises significant concerns, as training corpora contain sensitive, private, or copyright-protected information, potentially leading to privacy-related issues \cite{lukas2023analyzing,carlini2023extracting}. For instance, Deepfakes \cite{xu2023tall,yan2023ucf} can generate inappropriate content involving real individuals (e.g., nude celebrities). Faced with growing concerns over data privacy, regulations such as GDPR \cite{mantelero2013eu} and CCPA \cite{ccpa2018} require applications to support the removal of privacy-related content from training data, strengthening the Right to be Forgotten. Therefore, to protect a specific identity's privacy, a generative model must intentionally suppress or unlearn its distinctive features. This has motivated a line of research on machine unlearning \cite{nguyen2022survey,shaik2024exploring} of generative models. Moreover, generative unlearning is also highly valuable for removing inaccurate or outdated information contained in training data.

Exact machine unlearning involves retraining the model from scratch after removing the undesirable data, thereby guaranteeing the complete elimination of its influence. However, retraining is computationally intensive \cite{brophy2021machine,sekhari2021remember}, identifying and isolating specific subsets from large-scale datasets can also be prohibitively time-consuming. Recently, several approximate machine unlearning methods \cite{fan2024salun,li2024machine,wu2025unlearning,chen2025score,Shaheryar2025Unlearn} propose to forget specific data for generative models through directly fine-tuning the pre-trained model. Specially, \cite{li2024machine} proposed achieving unlearning in text-to-image generative models by aligning the embeddings of forgotten samples with Gaussian noise, while preserving the embedding consistency between the target and original model on the retain set. GUIDE \cite{seo2024generative} was the first to propose generative identity unlearning, which focus on removing the whole identity associated with a given single image from the generator while preserving the generative capability of the pre-trained model for other identities. Compared to machine unlearning in image-to-image \cite{krishnan2019boundless,chang2022maskgit} or text-to-image \cite{rombach2022high,singh2024negative,yang2025learn} generative models, generative identity unlearning remains largely unexplored.

While promising, these approximate machine unlearning methods in generative models still exhibit three issues. First, fine-tuning all the model's parameters still involves a large computational cost, which increases as the model size grows, and updating too many parameters can easily compromise the learned generative capability of the model. Second, the controllability and explainability of the model are limited, as the intensity of forgetting throughout the unlearning process cannot be effectively controlled. Third, forgetting stability is uncontrollable, easily leading to catastrophic collapse, where the model’s retention capability undergoes drastic degradation as forgetting progresses. GUIDE \cite{seo2024generative}, which introduced the task of generative identity unlearning, reflects many of these limitations. It requires full-model fine-tuning, offers no control over forgetting intensity, suffers from catastrophic collapse, and lacks safeguards against instability during unlearning.

To address the above challenges, we introduce LEGATO (Learn to forgEt identity in GenerAtive models via Trajectory-consistent neural Ordinary differential equations), a framework that formulates identity unlearning as a continuous transformation in the generator’s latent space. Rather than fine-tuning the full model, LEGATO adds fine-tunable lightweight Neural ODE adapters after each resolution stage, allowing targeted identity forgetting while keeping the original weights frozen. Neural Ordinary Differential Equations (Neural ODE) \cite{chen2018neural} recast a neural network as a continuous‑time dynamical system: the network’s “layers” become the hidden state of an ordinary differential equation. It provides a theoretical understanding that is more robust and invertible.
This design enables explicit control over forgetting intensity via the ODE step size, improves interpretability, and significantly reduces the number of trainable parameters. To further stabilize the process, we introduce a trajectory consistency constraint that regularizes the ODE dynamics and helps prevent catastrophic collapse. \textit{LEGATO is, to our knowledge, the first to apply Neural ODEs to machine unlearning and to treat identity forgetting as a continuous-time process.} Our contributions are as follows:

\begin{itemize}
\item We introduce a novel formulation of identity unlearning as a continuous transformation in latent space, implemented via lightweight Neural ODE adapters inserted into a pre-trained generator. This enables modular, parameter-efficient forgetting without updating the original model weights.
\item Our method allows explicit control over forgetting intensity by adjusting the ODE integration step size, providing fine-grained controllability and interpretability throughout the unlearning process. 
\item Theoretically, we proved the smoothness of neural ODE trajectories, non-monotonicity of step size and existence of an optimal interval in identity unlearning, and the feasibility of conflict-free multi-identity unlearning.
\item We propose enforcing trajectory consistency to enable stable unlearning, thereby avoiding adverse effects on the retention capacity of the model.
\item Extensive experiments across in-domain and out-of-domain benchmarks demonstrate that LEGATO achieves state-of-the-art performance while fine-tuning 95\% fewer parameters and 67\% reduction in parameter update time for generative identity unlearning.  
\end{itemize}

\section{Related Work}
\textbf{Machine Unlearning in Generative Models.} 
Mutual information \cite{li2024machine} serves as a bridge to achieve forgetting in image-to-image generative models by minimizing the L2 loss between representations of the forget samples and Gaussian noise. SalUn \cite{fan2024salun} is a saliency-guided unlearning framework that enables efficient and effective machine unlearning in both image classification and text-to-image generation models by selectively updating salient model weights. The Restricted Gradient method \cite{ko2024boosting} removes conflicts between forgetting and retaining objectives by orthogonalizing their gradients, preserving only the components beneficial to each task.

DoCo \cite{wu2025unlearning} and Score Forgetting Distillation (SFD) \cite{chen2025score} achieve effective concept unlearning in diffusion models through adversarial training and distilled alignment, respectively, but both require fine-tuning the original model parameters. GUIDE \cite{seo2024generative} instead targets generative identity unlearning in GANs using a single image, steering the forgotten identity toward a target while preserving overall generation quality. In contrast, LEGATO formulates identity unlearning as a continuous, modular transformation, updating only lightweight Neural ODE adapters while keeping the generator frozen. This design avoids the instability and overhead of full-model fine-tuning, enabling controllable, efficient, and stable unlearning without degrading generative quality.

\textbf{Neural ODE and Applications.} Neural ODE, inspired by ResNets \cite{he2016deep}, model transformations as continuous flows, where each layer corresponds to a discretized ODE step. A numerical solver computes the forward pass, enabling adaptive depth and continuous representation. During backpropagation, Neural ODEs use the adjoint sensitivity method \cite{pontryagin2018mathematical} for efficient gradient computation with constant memory, offering benefits like invertibility and smooth transitions. Neural ODE have been applied to diverse tasks including vision-language models \cite{zhang2025cross,zhang2024node}, medical imaging \cite{cheng2024continuous,cheng2025implicit}, time-series forecasting \cite{rubanova2019latent}, PDE solving \cite{yin2022continuous}, and large language models \cite{zhang2024unveiling}. Despite their broad utility, NODEs have not been explored for machine unlearning. In this work, we address this gap by being the first to leverage Neural ODEs for identity forgetting in generative models.

\begin{figure*}[t]
  \centering
  \includegraphics[width=\linewidth]{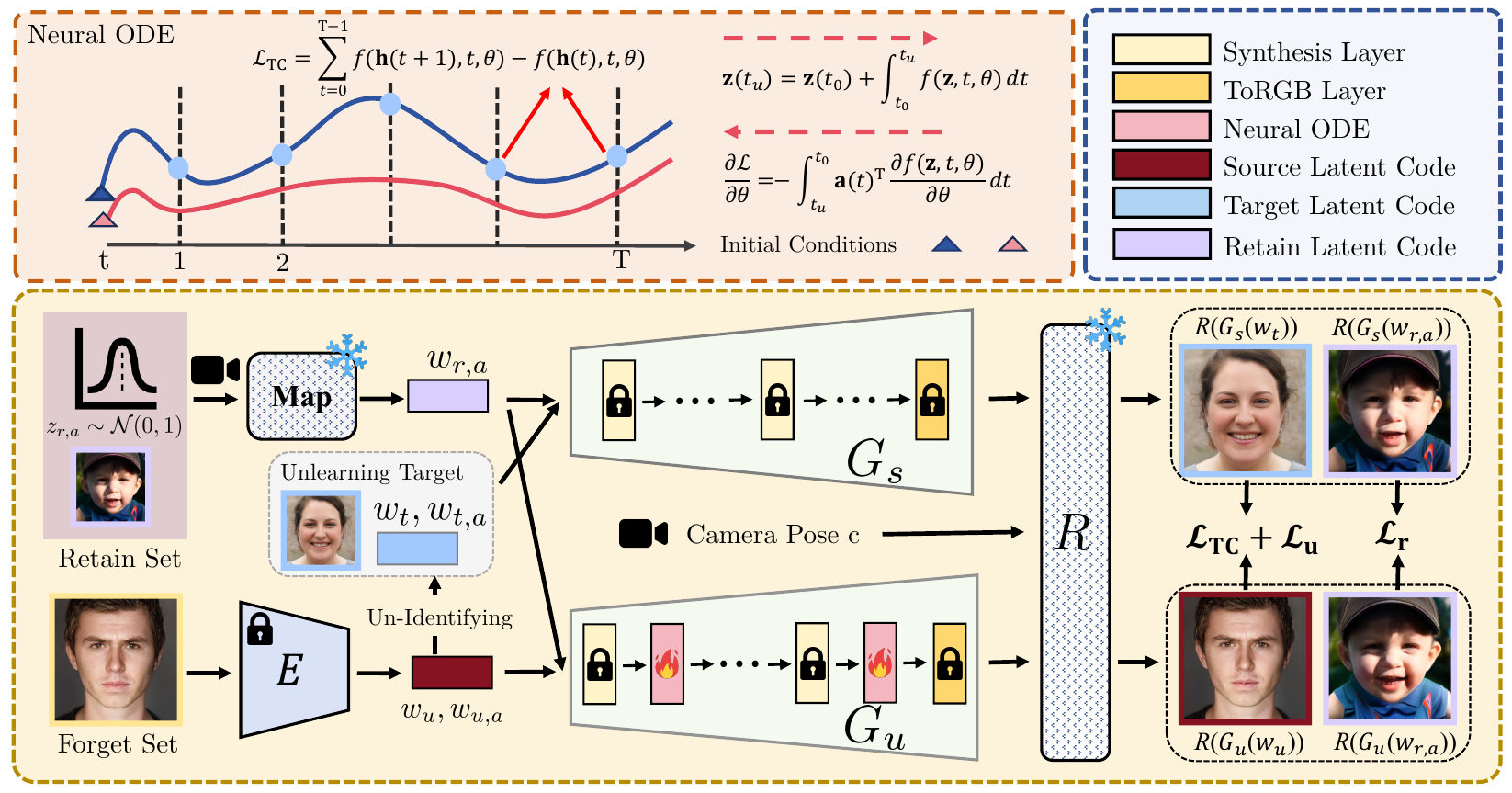}
  \caption{An overview of LEGATO. LEGATO introduces fine-tuned Neural ODE with fewer parameters, instead of fine-tuning the pretrained generator. Stable forgetting is achieved by imposing trajectory consistency constraint on the function. LEGATO aims to push the identity of the forget set toward a different one while preserving the generative ability for retained identities.}
  \label{fig:framework}
  \vspace{-10px}
\end{figure*}

\section{Method}
\subsection{Problem Formulation}
Given a GAN-based generative model EG3D \cite{chan2022efficient} and a single source image \(x_u \in \textbf{x}\) representing a specific identity, generative identity unlearning refers to the process of fine-tuning EG3D so that it is capable of reconstructing image \(\hat{x}_u \notin \textbf{x}\) from the latent code \(w_u\) of the source image, while maintaining generative ability for other identities. Specifically,
\begin{equation}
\label{Eq:Problem:Formulation}
\hat{x}_u = R(G_u(w_u);c), \; \text{where} \;  w_u= E(x_u), \;  \hat{x}_u \notin \mathbf{x}.
\end{equation}
In here, \(E\) denotes off-the-shelf inversion network \cite{yuan2023make} corresponding to EG3D, which encodes a given image into the latent code in the latent space of EG3D. \(G_u\) denotes the version fine-tuned from the pre-trained StyleGAN2 \cite{karras2019style} backbone \(G_s\) for the purpose of unlearning, and \(R\) is a fixed super-resolution module and \textit{c} denotes camera pose. After unlearning, \textit{multi-image test} is conducted by evaluating with a set of images $\{x_o^{i}\}_{i=1}^{N_o}$ from the same identity as $x_u$ ($x_u \neq x_o^{i}$), where $N_o$ denotes the number of such images.

\subsection{Method Overview}
In Figure \ref{fig:framework}, we provide an overview of our proposed LEGATO. The lower part illustrates the complete identity unlearning process. Given a source image \(x_u \in \textbf{x}\), we use an inversion network to obtain its latent code \(w_u\) and nearby codes \(w_{u,a}\) in the latent space. The unlearning targets \(w_t, w_{t,a} \notin \textbf{x}\) are then selected in reverse through the Un-Identifying strategy. The latent codes \(w_{r,a}\) of the retain set are sampled from a Gaussian distribution and mapped through the mapping network \(Map(\cdot)\) of EG3D. To preserve the generative capability for the retain set, we align the representations obtained by passing \(w_{r,a}\) through the pre-trained generator \(G_s\) and the fine-tuned generator \(G_u\), respectively. To achieve forgetting, we align the representations generated by passing \(w_t, w_{t, a}\) through the pre-trained generator \(G_s\) and \(w_u, w_{u,a}\) through the fine-tuned generator \(G_u\). \
\(G_u\) is built on Neural ODE, which act as an adapter layer that fine-tunes the generator’s parameters for identity unlearning. This architecture slashes the number of parameters that must be updated, yielding markedly greater training efficiency. In addition, the Neural ODE backbone learns a continuous transformation from the latent space to the image manifold, \textbf{allowing smoother and more stable forgetting to avoid catastrophic collapse}.

For effective generative identity unlearning, the objective of the unlearned model is to minimize the discrepancy between the unlearned image \(\hat{x}_u\), derived from \(w_u\),  and a target image \(\hat{x}_t\) from a different identity, derived from \(w_t\). Instead of selecting a random face or an average face generated by the mean latent code \(\overline{w}\) as the target image, we adopt the robust Un-Identifying strategy employed in GUIDE, which can be expressed as
\begin{equation}
\label{Eq:Un-Identifying}
w_t = \overline{w} - d \cdot \frac{w_{id}}{\Vert w_{id} \Vert_2}, \; \; w_{id} = w_u - \overline{w}, 
\end{equation}
where \(\overline{w}\) is the average calculated by \(Map(\cdot)\), and $d$ is a hyperparameter that controls the target image to deviate from the mean latent code.

To forget the identity of a given image \(x_u\), we need to consider the neighborhood of target and source latent codes embedded from \(x_u\) using \(E\). Specifically, with the scale sampled from a uniform distribution \(a^i \sim U (0, a_{max})\),  adjacency-aware latent code are defined as
\begin{equation}
\label{Eq:Adjacency:Latent}
\begin{aligned}
w_{u,a}^i = w_u + \Delta^i, \;\; w_{t,a}^i = w_t + \Delta^i, \\
\Delta^{i} \in \Delta = \{ \alpha^i \cdot \frac{w_{r,a}^i-w_u}{\Vert w_{r,a}^i-w_u \Vert_2} \}_{i=1}^{N_a},
\end{aligned}
\end{equation}
where \(a_{max}\) and \(N_a\) are hyperparameters. \(w_{r,a}\) is a latent code sampled from the random noise vector \(z_{r, a}\), i.e., \(w_{r,a} = Map(z_{r, a})\). Therefore, the optimization objective of our identity unlearning task can be formulated as:
\begin{equation}
\label{unlearn:loss:ori}
\min_{\theta} \underbrace{\mathcal{L}_{\text{u}}(\theta \mid w_u, w_t)}_{\text{Forget}} + \underbrace{\mathcal{L}_{\text{r}}(\theta \mid z_{r, a})}_{\text{Retain}} ,
\end{equation}
where \( \mathcal{L}_u \) denotes the loss for unlearning a specific identity, and \( \mathcal{L}_r \) represents the loss for preserving the generative capability on the retained set of identities. Details are provided in Section 4.1 of the supplementary material.

\subsection{Neural ODE Adapter for Unlearning}
\label{NODEs}

In this work, we introduce a parameter-efficient Neural ODE as an unlearning adapter, keeping the original model weights frozen to preserve generative capability and mitigate the adverse effects of excessive weight updates. An Neural ODE models the hidden state \textbf{$\boldsymbol{h}(t)$} as the solution of an initial‑value problem:
\begin{equation}
\boldsymbol{h}'(t) =  f(\boldsymbol{h}(t),  t, \theta), \quad  \boldsymbol{h}(t_0) = h_0 .
\end{equation}
In here, \(h_0\) represents the output of each synthesis layer, and $t \in \{0...T\}$. $\boldsymbol{h}(t)$ denotes the representation at each time step \(t\). $\theta$ are the parameters for the neural network.  Therefore, Neural ODE parameterized by  $\theta$ and governed by an ODE. In conventional feed‑forward networks, a very deep model demand substantially more memory. It will require a trade-off between accuracy and memory efficient.  In contrast, Neural ODE can be solved by an ODE solver in both forward and backward propagation which are more memory saving. In the forward pass, we view Neural ODE as an initial value ODE problem  and we can solve the solution by integration. We can express it in the following way: 
\begin{equation}
\mathbf{z}(t_u) = \mathbf{z}(t_0) + \int_{t_0}^{t_u} f(\boldsymbol{z},  t, \theta) dt.
\end{equation}
Then this integration form can be solved by and black-box ODE sovler
\begin{equation}
\mathbf{z}(t_u) = \text{ODESolve}(\mathbf{z}(t_0), f, \theta , t_0, t_u),
\end{equation}
where \(\text{ODESolve}(\cdot)\) refers to an ODE solver.
For the backward pass, we use another ODE solver and set $t_n$ as the staring point and $t_0$ as the final point. We can express the loss function in the following form:
\begin{equation}
\begin{aligned}
\mathcal{L}\bigl(\mathbf{z}(t_u)\bigr)
  &= \mathcal{L}\!\Bigl(
        \mathbf{z}(t_0)
        + \int_{t_0}^{t_u} f(\boldsymbol{z},  t, \theta)dt
     \Bigr) \\[6pt]
  &= \mathcal{L}\!\bigl(
        \text{ODESolve}\bigl(\mathbf{z}(t_0), f, \theta , t_0, t_u)
     \bigr).
\end{aligned}
\end{equation}
Then we can use the adjoint sensitivity method to compute the gradient and reduce the memory cost to O(1) memory cost.
We can compute it by:
\begin{equation}
\frac{\partial \mathcal{L}}{\partial \theta} = - \int_{t_u}^{t_0} \mathbf{a}(t)^T  \frac{\partial f(\boldsymbol{z},  t, \theta)}{\partial \theta} \, dt, 
\end{equation}
where $\mathbf{a}(t) = \frac{\partial \mathcal{L}}{\partial \mathbf{z}(t)}$ and $\frac{d\mathbf{a}(t)}{dt} = -\mathbf{a}(t) ^T \, \frac{\partial f(\boldsymbol{z},  t, \theta)}{\partial \mathbf{z}}$
We can solve all the  $\mathbf{z}, \mathbf{a},  \frac{\partial \mathcal{L}}{\partial \mathbf{z}(t)}$ with another ODE solver. 

\textbf{Neural ODE Flow.} Neural ODE adapt only the parameters that must change, thereby “unlearning” specific image features without perturbing the entire network. Each NODE defines a continuous vector field and solves an initial‑value problem, yielding unique trajectories in state space. This continuous‑time formulation leads to the following smoothness guarantee.
\begin{theorem}[Smooth Neural ODE Trajectories]
Let \( \Phi_{t_0 \rightarrow t} : [t_0, T) \times \mathbb{R}^d \times \Theta \rightarrow \mathbb{R}^d \), defined by \( \Phi_{t_0 \rightarrow t}(x_0, \theta) = \varphi(t; t_0, x_0, \theta) \), be the solution map of a Neural ODE parameterized by \( \theta \). If \( f \) is Lipschitz continuous in \( x \) and continuous in \( \theta \), then \( \Phi \) is of class \( \mathcal{C}^1 \). In particular, the solution is continuous and its Jacobians \( \partial_{x_0} \Phi \) and \( \partial_{\theta} \Phi \) exist and are continuous.
\end{theorem}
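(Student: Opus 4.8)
The plan is to recognize this as the classical theorem on smooth dependence of ODE flows on initial conditions and parameters, specialized to the Neural ODE vector field, and to organize the argument in three stages: first show the flow map $\Phi$ is well-defined and continuous, then produce the two Jacobians as solutions of the associated variational equations, and finally argue that these solutions are continuous and genuinely equal the partial derivatives of $\Phi$, which is exactly $\mathcal{C}^1$ regularity. First I would invoke the Picard--Lindel\"of (Cauchy--Lipschitz) theorem: since $f$ is Lipschitz in $x$ with constant $L$ (uniformly on compacta) and continuous in $t$ and $\theta$, the initial-value problem $\varphi'(t)=f(\varphi,t,\theta)$, $\varphi(t_0)=x_0$, has a unique solution on $[t_0,T)$ for every $(x_0,\theta)$, so $\Phi$ is well-defined.

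For continuity I would compare two trajectories with data $(x_0,\theta)$ and $(\tilde x_0,\tilde\theta)$, rewrite each in integral (Volterra) form, subtract, and estimate the difference using the Lipschitz bound on $f$. Applying Gr\"onwall's inequality then yields a bound of the shape $\|\Phi(x_0,\theta)-\Phi(\tilde x_0,\tilde\theta)\|\le (\|x_0-\tilde x_0\| + C\|\theta-\tilde\theta\|)\,e^{L(t-t_0)}$, which establishes joint continuity of $\Phi$ in $(t,x_0,\theta)$ and in particular continuity of the solution asserted in the statement.

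Next, to obtain the Jacobians I would derive the variational equations by formally differentiating the integral equation. Differentiation in $x_0$ shows the candidate $M(t):=\partial_{x_0}\Phi$ must satisfy the linear matrix ODE $\dot M=\partial_x f(\varphi,t,\theta)\,M$ with $M(t_0)=I$, while $N(t):=\partial_\theta\Phi$ satisfies the inhomogeneous linear ODE $\dot N=\partial_x f(\varphi,t,\theta)\,N+\partial_\theta f(\varphi,t,\theta)$ with $N(t_0)=0$. Because $\varphi$ is continuous and the partials $\partial_x f$, $\partial_\theta f$ are continuous along the trajectory, each is a linear system with continuous coefficients, hence has a unique continuous solution by the fundamental theory of linear ODEs. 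The remaining rigorous point is to verify these candidates are the actual derivatives of $\Phi$: I would form the difference quotient, subtract the variational solution, and show the remainder is $o(\|x_0-\tilde x_0\|)$ (respectively $o(\|\theta-\tilde\theta\|)$) by a second Gr\"onwall estimate on the remainder dynamics.

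The hard part is reconciling the stated hypotheses with the $\mathcal{C}^1$ conclusion: Lipschitz continuity in $x$ alone delivers existence, uniqueness, and continuous dependence, but it does not by itself furnish differentiability of $f$, which the variational-equation argument requires. The clean resolution---and the one consistent with the Neural ODE setting---is that $f$ is realized by a neural network with smooth activations, so $f$ is in fact $\mathcal{C}^1$ (indeed $\mathcal{C}^\infty$) in $(x,\theta)$ and its partials $\partial_x f$, $\partial_\theta f$ exist and are continuous; the Lipschitz condition then serves purely as the quantitative bound driving the Gr\"onwall estimates. Under this reading, continuity of $\Phi$ together with continuity of $M$ and $N$ shows that all first-order partial derivatives of $\Phi$ exist and are continuous, which is precisely the definition of $\Phi\in\mathcal{C}^1$.
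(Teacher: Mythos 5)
Your proposal follows essentially the same route as the paper's proof: Picard--Lindel\"of for existence and uniqueness, Gr\"onwall and continuous parameter dependence for continuity of the flow, and the variational equations $\dot J_x=\partial_x f\,J_x$, $\dot J_\theta=\partial_x f\,J_\theta+\partial_\theta f$ as linear ODEs with continuous coefficients to obtain the Jacobians. Your observation that Lipschitzness of $f$ in $x$ alone cannot yield $\mathcal{C}^1$ regularity is well taken and is exactly how the paper resolves it in the supplementary statement, which adds the assumption (A3) that $\partial_x f$ and $\partial_\theta f$ exist and are continuous; your extra step of verifying via a second Gr\"onwall estimate that the variational solutions are genuinely the derivatives is a point the paper glosses over.
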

The complete theorem and proof is given in Section 1 of the supplementary material. Because Neural ODE learns a $\mathcal{C}^{1}$ flow, the model behaves more smoothly than discrete layers, enabling it to approximate target functions with higher retention capacity. This smooth theorem can benefit the unlearning process in two ways. Smoothness minimizes error per step and reduces accumulated error over time. In addition, a smooth path for the unlearning process can ensure that undesirable features are gradually removed rather than abruptly changed, thus mitigating catastrophic collapse during unlearning.

\begin{table*}[ht]
\centering
\setlength{\tabcolsep}{5pt}
\begin{tabular}{c|ccc|ccc|cccc}
\toprule
\multirow{2}{*}{Methods} & \multicolumn{3}{c|}{Random} & \multicolumn{3}{c|}{In-Domain (FFHQ)} & \multicolumn{4}{c}{Out-of-Domain (CelebAHQ)} \\
\cmidrule(lr){2-4} \cmidrule(lr){5-7} \cmidrule(lr){8-11}
& ID $\downarrow$ & \( \mathrm{FID}_{\text{pre}} \) $\downarrow$ & \( \Delta \mathrm{FID}_{\text{real}} \) $\downarrow$ & ID $\downarrow$ & \( \mathrm{FID}_{\text{pre}} \) $\downarrow$ & \( \Delta \mathrm{FID}_{\text{real}} \) $\downarrow$ & ID $\downarrow$ & \( \mathrm{ID}_{\text{avg}} \) $\downarrow$ & \( \mathrm{FID}_{\text{pre}} \) $\downarrow$ & \( \Delta \mathrm{FID}_{\text{real}} \) $\downarrow$ \\
\midrule
GUIDE & 0.10 & 10.29\text{\small$\pm$2.58} & 8.31\text{\small$\pm$1.58} & 0.06  & 7.77\text{\small$\pm$1.12} & 2.73\text{\small$\pm$0.84} & 0.02  & 0.23 & 7.44\text{\small$\pm$1.66} & 3.36\text{\small$\pm$1.12} \\
SalUn & 0.12 & 10.88 & 8.74 & 0.02 & 7.38 & 2.38 & -0.01 & 0.19 & 7.55 & 3.43 \\
RG    & 0.01 & 9.26  & 7.02 & 0.03  & 7.02 & 2.19 & -0.01 & 0.20 & 6.90 & 2.99 \\
DoCo  & -0.03 & 16.59 & 15.32 & 0.02 & 12.23 & 6.13 & -0.03 & 0.16 & 11.19 & 5.73 \\
LoRA  & 0.12 & 10.80 & 8.00 & -0.02 & 6.95 & 1.47 & -0.01 & 0.16 & 7.08 & 2.22 \\
\midrule
LEGATO & \textbf{-0.07} & \textbf{8.76}\text{\small$\pm$0.53} & \textbf{6.01}\text{\small$\pm$0.25} & 0.00 & \textbf{6.12}\text{\small$\pm$0.42} & \textbf{1.05}\text{\small$\pm$0.12} & 0.00 & 0.18 & \textbf{6.09}\text{\small$\pm$0.46} & \textbf{1.78}\text{\small$\pm$0.16} \\
Gains & - & \textbf{+15\%} & \textbf{+28\%} & - & \textbf{+21\%} & \textbf{+62\%} & - & \textbf{+22\%} & \textbf{+18\%} & \textbf{+47\%} \\ 
\bottomrule
\end{tabular}
\caption{Quantitative results of LEGATO and the baseline in the generative identity unlearning task, \( \mathrm{ID}_{\text{avg}} \) represents the results under the \textbf{multi-image} setting and the remaining results are under the single-image setting.}
\label{tab:overall：performance}
\vspace{-10px}
\end{table*}

\textbf{Controllability and Explainability.}
The rich theory of stability and error control for ordinary differential equations lets us put quantitative guarantees on the unlearning process. Regarding the controllability, choosing an explicit forward‑Euler solver for the ODE flow makes the process of unlearning procedure observable and auditable. 
For a fixed time‑step \(\Delta t\), the state update reads
\begin{equation}
  \boldsymbol{h}(t+1) = \boldsymbol{h}(t) + \Delta t \cdot f(\boldsymbol{h}(t),  t, \theta) .
  \label{eq:neural_ode_update}
\end{equation}

\begin{theorem}[Non-Monotonicity of Step Size]
\label{thm:nonmonotonic_stepsize}
Consider a Neural ODE unlearning process discretized by an explicit solver (e.g., Forward Euler) as Eq. \ref{eq:neural_ode_update}, and $\theta$ is updated via SGD. Let the total performance be $\mathcal{J}(\Delta t) = \mathcal{F}(\Delta t) + \mathcal{R}(\Delta t)$, with $\mathcal{F}$ quantifying forgetting and $\mathcal{R}$ retention. 
Then, under mild regularity conditions, $\mathcal{J}(\Delta t)$ is non-monotonic in $\Delta t$, and there exists a non-empty interval $[\Delta t_{\min}, \Delta t_{\max}]$ that optimally balances the forgetting–retention trade-off.
\end{theorem}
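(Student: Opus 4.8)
The plan is to treat $\mathcal{J}$ as a continuous scalar function of the single parameter $\Delta t$ over a compact domain $[0, \Delta t_{\sup}]$ and to reduce the claim to an elementary extreme-value argument in which the two summands $\mathcal{F}$ and $\mathcal{R}$ carry opposite monotone tendencies. First I would invoke the $\mathcal{C}^1$ smoothness of the Euler-discretized flow map (Theorem~1 applied to the forward-Euler update in Eq.~\eqref{eq:neural_ode_update}): since the composed generator output depends $\mathcal{C}^1$ on the accumulated displacement and the forgetting/retention metrics are assumed Lipschitz in that output, the maps $\Delta t \mapsto \mathcal{F}(\Delta t)$ and $\Delta t \mapsto \mathcal{R}(\Delta t)$ are continuous (indeed $\mathcal{C}^1$), hence so is $\mathcal{J} = \mathcal{F} + \mathcal{R}$.

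Next I would establish the two opposing trends. The total forward-Euler displacement from the frozen model scales, to leading order, with $\Delta t$: larger steps drive the forget-identity latent further along the field $f$ toward the un-identified target $w_t$, so $\mathcal{F}$ is non-decreasing and saturates at a ceiling $\mathcal{F}_{\max}$ once the target is essentially reached. Symmetrically, the same enlarged displacement perturbs the retained latents $w_{r,a}$ away from their frozen outputs $G_s(w_{r,a})$, so $\mathcal{R}$ is non-increasing, falling from its ceiling $\mathcal{R}_{\max}$ at $\Delta t = 0$ (where $G_u \equiv G_s$, i.e.\ perfect retention and no forgetting) toward a collapse floor $\mathcal{R}_{\min}$. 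These monotonicities follow from writing each metric as a monotone function of the displacement magnitude and differentiating through the $\mathcal{C}^1$ flow.

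The conclusion then follows from a boundary comparison. At the endpoints $\mathcal{J}(0) = \mathcal{F}_{\min} + \mathcal{R}_{\max}$ and $\mathcal{J}(\Delta t_{\sup}) = \mathcal{F}_{\max} + \mathcal{R}_{\min}$. The mild regularity I would assume is a separation of scales: forgetting saturates before retention collapses, so there is an intermediate $\Delta t$ at which $\mathcal{F}$ is already near $\mathcal{F}_{\max}$ while $\mathcal{R}$ is still near $\mathcal{R}_{\max}$, giving $\mathcal{J}(\Delta t) > \max\{\mathcal{J}(0), \mathcal{J}(\Delta t_{\sup})\}$ whenever forgetting and retention are both nontrivial ($\mathcal{F}_{\max} > \mathcal{F}_{\min}$ and $\mathcal{R}_{\max} > \mathcal{R}_{\min}$). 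Since $\mathcal{J}$ rises strictly above both endpoints it cannot be monotone; equivalently, $\mathcal{J}'(0^+) > 0$ (marginal forgetting dominates) while $\mathcal{J}' < 0$ for large $\Delta t$ (the forgetting gradient has vanished), and the intermediate value theorem applied to the continuous $\mathcal{J}'$ yields an interior stationary point. Finally, by the Weierstrass extreme value theorem $\mathcal{J}$ attains a maximum $\mathcal{J}^\star$ on $[0, \Delta t_{\sup}]$ at an interior point; fixing a tolerance $\epsilon > 0$, the superlevel set $\{\Delta t : \mathcal{J}(\Delta t) \ge \mathcal{J}^\star - \epsilon\}$ is closed and contains the maximizer, and under a concavity (unimodality) regularity assumption on $\mathcal{F}$ and $\mathcal{R}$ it is a single non-empty interval $[\Delta t_{\min}, \Delta t_{\max}]$, the claimed optimal balancing window.

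The step I expect to be the main obstacle is decoupling $\Delta t$ from the SGD-trained parameters $\theta$. Because $\theta$ is itself optimized for a given discretization, the induced landscape $\mathcal{J}(\Delta t)$ implicitly composes the training map with the flow, and one must argue that this composition preserves both continuity and the separation-of-scales assumption. I would handle this with a quasi-static / envelope argument --- treating $\theta(\Delta t)$ as a $\mathcal{C}^1$ minimizer branch and applying the implicit function theorem so that $\mathcal{J}(\Delta t) = (\mathcal{F}+\mathcal{R})(\Delta t, \theta(\Delta t))$ remains $\mathcal{C}^1$ --- but pinning down the precise conditions under which the saturation-before-collapse ordering survives training is the genuinely delicate part, and is exactly what the phrase \emph{mild regularity conditions} must encode.
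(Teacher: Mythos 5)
Your proposal reaches the stated conclusion, but it does so by a genuinely different mechanism than the paper, and the difference matters. The paper's proof makes $\mathcal{R}$ itself non-monotone (U-shaped) in $\Delta t$: for large $\Delta t$ retention degrades through explicit-solver discretization error and stability violation (as you also argue), but for \emph{small} $\Delta t$ retention \emph{also} degrades, because the per-step deterministic update $\|\Delta t\, f(h_k)\|$ shrinks while the SGD and discretization noise does not, so the signal-to-noise ratio scales as $\mathrm{SNR}(\Delta t)\propto\Delta t\to 0$ and training enters a noise-dominated, oscillatory regime. Non-monotonicity of $\mathcal{J}$ then follows from Weierstrass applied to the continuous $\mathcal{J}$ with both ends bad, yielding an interior \emph{minimizer} (the paper's metrics are ID/FID, lower is better). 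Your route instead posits that $\mathcal{R}$ is monotonically best at $\Delta t=0$ (``$G_u\equiv G_s$, perfect retention'') and decays, with the non-monotonicity of $\mathcal{J}$ manufactured by a separation-of-scales assumption that forgetting saturates before retention collapses. That premise is precisely what the paper's small-step analysis denies, and it is contradicted by the paper's own Table~3, where step size $0.10$ gives worse $\mathrm{FID}_{\text{pre}}$ and $\Delta\mathrm{FID}_{\text{real}}$ than $0.40$. So while your argument is internally consistent and formally delivers ``non-monotone $\mathcal{J}$ with an optimal interval,'' it proves the theorem for the wrong reason relative to the phenomenon being modeled; the missing idea is the noise-dominance effect at small $\Delta t$. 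On the other hand, your treatment of the $\theta$--$\Delta t$ coupling via an envelope/implicit-function argument is more careful than anything in the paper, which only gestures at this by remarking that with fixed $N$ the source of non-monotonicity shifts from noise accumulation to a mismatch in update scale; that part of your proposal is a genuine improvement in rigor over the published proof, which, like yours, is ultimately heuristic and packages the decisive behavior into its assumptions.
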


The discretization error in numerical methods is closely linked to the step size: large steps cause greater per-step and global errors, degrading retention ability, while overly small steps lead to instability and suboptimal performance due to mini-batch gradient noise, as illustrated in Theorem \ref{thm:nonmonotonic_stepsize}. Detailed theorem statement and proof are provided in Section 2 of the supplementary material. As a first-order method, the Euler method offers relatively good stability and supports a large range of step sizes for which convergence is guaranteed. In addition, a larger step size $\Delta t$ results in a greater magnitude of forgetting per step, leading to faster model updates and higher intensity of forgetting. In this sense, the step size offers an interpretable and controllable mechanism for regulating the forgetting strength.

\subsection{Trajectory Consistency Constraint}
Building on the previous subsection, a Neural ODE is a continuous, first‑order–differentiable dynamical system. To obtain smoother trajectories and to limit the negative impact that unlearning can have on the model’s generative ability of the retained data. We smooth the Neural ODE’s output during unlearning to achieve stable forgetting. This approach is referred to as Trajectory-Consistent Constraint, and the corresponding loss is given as follows:
\begin{equation}
\mathcal{L}_{\text{TC}} = \sum_{t=0}^{T-1} \| f(\boldsymbol{h}(t+1),  t+1, \theta) - f(\boldsymbol{h}(t),  t, \theta) \|_2^2.
\end{equation}
By smoothing the neural ODE, the learned vector field becomes locally more smooth, which improves the consistency of trajectory interpolation and extrapolation, and enhances robustness to small perturbations during training. In summary, our final objective is as follows:
\begin{equation}
\mathcal{L}_{\text{total}} =  \mathcal{L}_{\text{u}} + \mathcal{L}_{\text{TC}} + \mathcal{L}_{\text{r}}.
\end{equation}

\subsection{Conflict-Free Multi-Identity Unlearning}
In conventional discrete networks (e.g., LoRA or direct fine-tuning), unlearning multiple identities often leads to interference or conflicts \cite{ko2024boosting,Yike2024Resolving}. In contrast, the deterministic continuous flow defined by Neural ODEs with \textbf{non-intersecting trajectories} can effectively mitigate this issue. On one hand, different identities occupy distinct regions (or low-dimensional manifolds) in the latent space; the ODE flow thus continuously and cohesively transports an entire cluster of points corresponding to a specific identity toward a target region, without abruptly "jumping" onto the trajectory of another identity. On the other hand, the continuous flow induced by Neural ODEs closely resembles a homeomorphism, gradually pushing identities apart in the latent space rather than overwriting model parameters.

\begin{theorem}[Conflict-Free Multi-Identity Unlearning]
\label{thm:nonconflict}
Under Assumptions A1--A3, for any two distinct identities $i \neq j$ and any initial representations
$h_i(0) \in \mathcal{M}_i$, $h_j(0) \in \mathcal{M}_j$, the Neural ODE flow satisfies:

\begin{enumerate}
    \item \textbf{Trajectory Non-Intersection:}
    \[
    \Phi_t(h_i(0)) \neq \Phi_t(h_j(0)), \quad \forall t \in [0,T].
    \]

    \item \textbf{Manifold Non-Overlap:}
    \[
    \Phi_t(\mathcal{M}_i) \cap \Phi_t(\mathcal{M}_j) = \emptyset, \quad \forall t \in [0,T].
    \]

    \item \textbf{Forgetting--Retention Decoupling:}
    If $i \notin \mathcal{F}$, then
    \[
    \Phi_t(\mathcal{M}_i) \subset \mathcal{U}_i, \quad \forall t \in [0,T].
    \]
\end{enumerate}
\end{theorem}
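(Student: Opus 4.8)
The plan is to derive all three claims from a single structural fact: a Lipschitz vector field induces a flow that, for each fixed time, is an injective map — indeed a $\mathcal{C}^1$ diffeomorphism onto its image. First I would invoke the Picard--Lindel\"of theorem. Assumption A1 (the same Lipschitz continuity in $x$ underlying the smoothness result established above) guarantees that for every initial condition the integral curve $t \mapsto \Phi_t(x_0)$ exists and is unique on $[0,T]$. Combined with the $\mathcal{C}^1$ dependence on initial data already proved, this makes each $\Phi_t$ a $\mathcal{C}^1$ diffeomorphism whose inverse is the backward flow $\Phi_t^{-1} = \Phi_{t \to 0}$. Injectivity is the workhorse for the first two parts.

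For \emph{Trajectory Non-Intersection} I would argue by contradiction. Suppose $\Phi_t(h_i(0)) = \Phi_t(h_j(0))$ at some $t \in [0,T]$. Integrating the ODE backward from this common state, uniqueness forces $h_i(0) = h_j(0)$. But Assumption A2 (disjointness of identity manifolds, $\mathcal{M}_i \cap \mathcal{M}_j = \emptyset$) together with $h_i(0) \in \mathcal{M}_i$ and $h_j(0) \in \mathcal{M}_j$ gives $h_i(0) \neq h_j(0)$, a contradiction. \emph{Manifold Non-Overlap} is then the set-valued restatement of the same fact: since $\Phi_t$ is injective, it carries disjoint sets to disjoint images, so $\mathcal{M}_i \cap \mathcal{M}_j = \emptyset$ immediately yields $\Phi_t(\mathcal{M}_i) \cap \Phi_t(\mathcal{M}_j) = \emptyset$ for every $t$.

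The remaining claim, \emph{Forgetting--Retention Decoupling}, is where the real work lies and where Assumption A3 must enter essentially. Here I would take $\mathcal{U}_i$ to be a forward-invariant tube around the retained manifold $\mathcal{M}_i$ and show the flow cannot escape it. The mechanism is that for a retained identity $i \notin \mathcal{F}$ the retain loss $\mathcal{L}_r$ aligns the fine-tuned generator with the frozen pre-trained one on $\mathcal{M}_i$, so (as encoded by A3) the learned field $f$ is either near-stationary on $\mathcal{M}_i$ or points inward along $\partial \mathcal{U}_i$. I would then introduce a barrier function $V_i$ with $\mathcal{U}_i = \{ V_i < c \}$ and verify the Nagumo-type boundary condition $\langle \nabla V_i, f \rangle \le 0$ on $\partial \mathcal{U}_i$; a standard forward-invariance lemma then gives $\Phi_t(\mathcal{M}_i) \subset \mathcal{U}_i$ for all $t \in [0,T]$.

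I expect the main obstacle to be Part 3 rather than the first two, which are essentially immediate corollaries of uniqueness. The difficulty is that decoupling is not a generic property of flows — it holds only because the training objective shapes $f$ on retained regions — so the argument stands or falls on stating A3 tightly enough (e.g.\ strictly inward-pointing dynamics, or $\|f\|$ controlled on $\mathcal{M}_i$) to license the barrier step. Once A3 is phrased as a genuine boundary condition the conclusion follows routinely; the real subtlety is choosing $\mathcal{U}_i$ and $V_i$ so that the hypothesis is simultaneously satisfiable and non-vacuous.
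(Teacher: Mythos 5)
Your treatment of parts 1 and 2 coincides with the paper's: both rest on Picard--Lindel\"of uniqueness under the Lipschitz assumption, the contradiction argument for trajectory non-intersection is verbatim the paper's, and manifold non-overlap is read off from injectivity of $\Phi_t$ applied to the disjoint sets $\mathcal{M}_i,\mathcal{M}_j$. The divergence is in part 3, and there your instincts are sharper than the paper's own argument. The paper proves decoupling in two sentences: for $i\notin\mathcal{F}$, Assumption A3 says $f=f_0$ outside $\bigcup_{k\in\mathcal{F}}\mathcal{U}_k$, and then it asserts that ``the trajectory remains within $\mathcal{U}_i$ by continuity and disjointness of neighborhoods.'' That inference does not follow from A1--A3 as stated: the unmodified field $f_0$ is under no obligation to keep $\mathcal{U}_i$ forward-invariant, and disjointness of the neighborhoods is irrelevant to whether a trajectory exits $\mathcal{U}_i$ into the complement of all the $\mathcal{U}_k$. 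Your proposal makes exactly the repair that is needed --- take $\mathcal{U}_i=\{V_i<c\}$ for a barrier function $V_i$, impose the Nagumo condition $\langle \nabla V_i, f\rangle\le 0$ on $\partial\mathcal{U}_i$, and invoke a forward-invariance lemma --- and you correctly flag that this requires strengthening A3 into a genuine boundary condition (inward-pointing or near-stationary dynamics on $\mathcal{M}_i$), which the paper never states. So your route is not merely different; it identifies and closes a gap in the paper's proof of part 3, at the cost of an additional hypothesis that the paper implicitly assumes but does not articulate. If you write this up, state that extra hypothesis explicitly as an amendment to A3 rather than leaving it as a remark, since without it the third claim is false for a generic $f_0$.
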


\begin{remark}
LEGATO ensures unlearning trajectories of multiple identities do not intersect and remain non-overlapping at the manifold level, while the vector field within the regions corresponding to retained identities remains unaltered.
\end{remark}

\begin{figure*}[t]
  \centering
  \includegraphics[width=\linewidth]{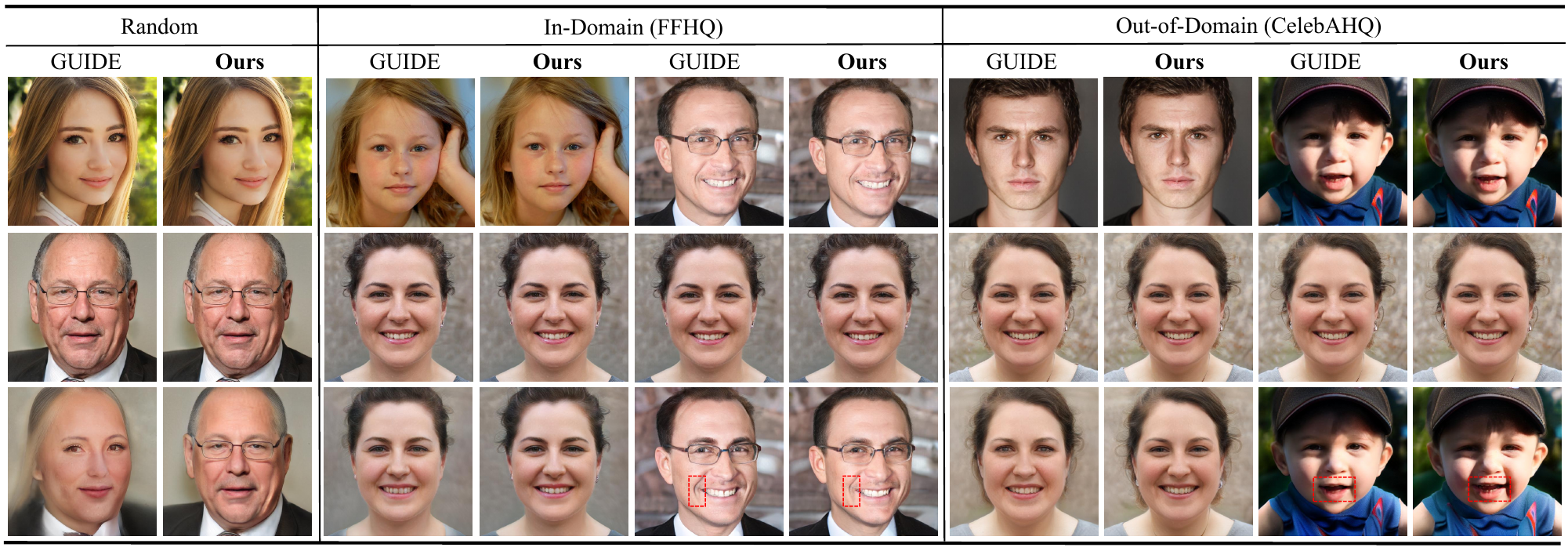}
  \caption{Qualitative results of GUIDE and the baseline in generative identity unlearning task. For the given source image each (the first
row), LEGATO aimed to erase the identity in the pre-trained generator while preserving the ability to generate other identities. The images in the second and third row are the target and unlearned images, respectively.}
   \label{fig:overall:result}
   \vspace{-10px}
\end{figure*}

\section{Experimental Results}
\subsection{Experimental Setting}

\textbf{Datasets.} We evaluate our method on three settings: (1) Random, where a source image is randomly sampled from the noise space; (2) InD (in-domain), where the source image is sampled from FFHQ \cite{karras2019style}, the pre-training dataset; and (3) OOD (out-of-domain), where the source image is sampled from CelebAHQ \cite{karras2018progressive}, which differs from the pre-training distribution. For the InD and OOD settings, latent codes are obtained using a GAN inversion network. Additionally, in the OOD setting, we perform multi-image evaluation on CelebAHQ by testing unlearning performance on other images sharing the same identity as the source image.

\textbf{Baselines.} We selected GUIDE \cite{seo2024generative}, the only available model for the generative identity unlearning task, as our baseline. Additionally, we implemented several methods from the concept unlearning task in generative models by ourself, such as DoCo \cite{wu2025unlearning}, RG \cite{ko2024boosting} and SalUn \cite{fan2024salun}. As a comparison to LEGATO, we also designed a LoRA-style \cite{Hu2022LoRA} approach by fine-tuning additional \textbf{discrete layers} to achieve unlearning, thereby comparing our method.

\textbf{Evaluation Metrics.} 
LEGATO’s performance was evaluated on unlearning (forget set) and retention (retain set). Unlearning was quantified via identity similarity (ID) from CurricularFace \cite{huang2020curricularface}, comparing images from identical latent codes before and after unlearning. A lower ID reflects greater dissimilarity—and thus stronger forgetting; we report the \( \mathrm{ID}_{\text{avg}} \) across a multi‑image test. This metric captures both global and local facial attributes. For retention capability, we evaluated distribution shifts by computing the Frechet Inception Distance (FID) score \cite{heusel2017gans} between the pre-trained and unlearned generators (\( \mathrm{FID}_{\text{pre}} \)), as well as the shift relative to real FFHQ images (\( \Delta \mathrm{FID}_{\text{real}} \)). A lower \( \mathrm{FID}_{\text{pre}} \) and \( \Delta \mathrm{FID}_{\text{real}} \) indicates better retention capability. Implementation Details of model can be found in Section 4.2 of the supplementary material.

\subsection{Overall Results}
\textbf{Numerical Results.} Table~\ref{tab:overall：performance} shows that LEGATO outperforms five unlearning baselines in both unlearning and retention capability. For unlearning, LEGATO achieves the best performance in \textit{Random}, and surpasses GUIDE while matching the strongest ID suppression methods in \textit{InD} and \textit{OOD}. Crucially, this privacy gain does not come at the cost of visual quality: LEGATO attains the lowest \( \mathrm{FID}_{\text{pre}} \) across all settings (8.76 in \textit{Random}, 6.12 in \textit{InD}, and 6.09 in \textit{OOD}) and the smallest degradation relative to real images (\( \Delta \mathrm{FID}_{\text{real}} \)), outperforming the next-best method by 14--29\%. Competing approaches exhibit a clear privacy--utility trade-off: methods with strong ID suppression (e.g., DoCo) nearly double FID, while those preserving moderate FID (e.g., RG, LoRA) perform similarly to GUIDE. Overall, LEGATO dominates the privacy--utility frontier, generalising from in-domain to out-of-domain data while effectively removing identity information and preserving high generative quality.

\begin{table}[t!]
\centering
\begin{tabular}{lcc}
\toprule
\textbf{Methods} & \textbf{Fine-tuning Prams} & \textbf{Time} \\
\midrule
GUIDE  & 28.20M &  4.9ms \\
DoCo   & 28.20M &  4.2ms \\
RG     & 28.20M &  3.6ms \\
LoRA   & 1.51M (-95\%)  & 1.6ms \\
LEGATO & \textbf{1.51M} (\textbf{-95\%})  & \textbf{1.6ms} (\textbf{-67\%}) \\
\bottomrule
\end{tabular}
\caption{Comparison of methods in terms of fine-tuning parameters and average parameter update time per epoch.}
\label{tab:methods:comparison:params}
\vspace{-10px}
\end{table}

Table \ref{tab:methods:comparison:params} highlights the sharp disparity in computational efficiency among the compared methods, as quantified by fine-tuning parameters and average parameter update time per epoch. Full-network approaches (GUIDE, DoCo, RG) must update around 28.20 million parameters, resulting in longer update times. In contrast, LoRA and LEGATO use lightweight adapters, updating just 1.51 million parameters—a remarkable 95\% reduction—leading to a 67\% reduction in update time (1.6ms per epoch). Importantly, LEGATO achieves superior retention performance compared to both LoRA and GUIDE, demonstrating that significant computational savings can be realized without compromising identity protection or overall effectiveness.

\textbf{Visual Results.} In Figure \ref{fig:overall:result}, we present the images generated by the unlearned model from the source image, and for FFHQ and CelebAHQ datasets, we also show the generation capability on the retain set. The results demonstrate that, under the same target settings, our method achieves better performance on the forgot set, while also better preserving the ability to generate fine details on the retain set (other identities), such as area in red box. More numerical and visual results can be found in the supplementary material.

\begin{table}[t]
\centering
\begin{tabular}{cccccc}
\toprule
Steps & Step size & ID & \( \mathrm{ID}_{\text{avg}} \) & \( \mathrm{FID}_{\text{pre}} \) & \( \Delta \mathrm{FID}_{\text{real}} \) \\
\midrule
4  & 0.10 & -0.01 & 0.18 & 6.85 & 2.29 \\
4  & 0.20 & -0.01 & 0.18 & 6.21 & 2.05 \\
4  & 0.40 & 0.00 & 0.18 & \textbf{6.09} & \textbf{1.78} \\
4  & 0.60 & -0.01 & 0.18 & 6.46 & 2.22 \\
4  & 1.00 & -0.01 & 0.15 & 7.59 & 3.22 \\
\bottomrule
\end{tabular}
\caption{Comparison of step size of Neural ODE and performance under multi-image setting (CelebAHQ).}
\label{tab:comparison:controllability}
\vspace{-10px}
\end{table}

\textbf{Controllability and Explainability.} As demonstrated in Table \ref{tab:comparison:controllability}, the choice of step size influences the retention capability in generative identity unlearning tasks. Conversely, an excessively small step size yields diminishing returns, providing only marginal gains while increasing computational overhead. This relationship mirrors the characteristics of classical numerical solvers, where the step size directly controls the numerical error.  Specifically, a moderate step size of approximately 0.4 achieves an optimal balance, delivering robust identity unlearning without triggering catastrophic collapse of generative capability. This clearly establishes controllability, enabling precise tuning of the forgetting intensity, and provides a transparent, explainable strategy for selecting effective operational parameters in Neural ODE-based unlearning frameworks.

\textbf{Robust to Noise Attack.} As shown in Table \ref{tab:comparison:noise:attack}, LEGATO exhibits significantly better robustness than GUIDE under noise attacks, where Gaussian noise is added to the test latent codes. We provide an intuition on why Neural ODE has better robustness. One of the well-known theorems in ODE is that the ODE solution trajectories never cross when the initial condition changes \cite{coddington1956theory}. In contrast, CNN does not have this property, and that's why Neural ODE has better robustness. 

\begin{table}[ht]
\centering
\begin{tabular}{ccccc}
\toprule
Methods & ID & \( \mathrm{ID}_{\text{avg}} \) & \( \mathrm{FID}_{\text{pre}} \) & \( \Delta \mathrm{FID}_{\text{real}} \) \\
\midrule
GUIDE     & 0.02 & 0.21 & 8.11 & 3.42 \\
LEGATO    & \textbf{0.00} & \textbf{0.17} & \textbf{6.98}  & \textbf{2.34}  \\
\bottomrule
\end{tabular}
\caption{Comparison of GUIDE and LEGATO under noise attack.}
\label{tab:comparison:noise:attack}
\vspace{-5px}
\end{table}

\textbf{Multi-Identity Unlearning.} The results in Table \ref{tab:comparison:multi:identity} show that LEGATO effectively mitigates conflict when unlearning multiple identities—evidenced by low $ID$ and $ID_{avg}$ scores—while preserving the generative capability on the retained set.

\begin{figure}[t]
  \centering
  \includegraphics[width=0.90\linewidth]{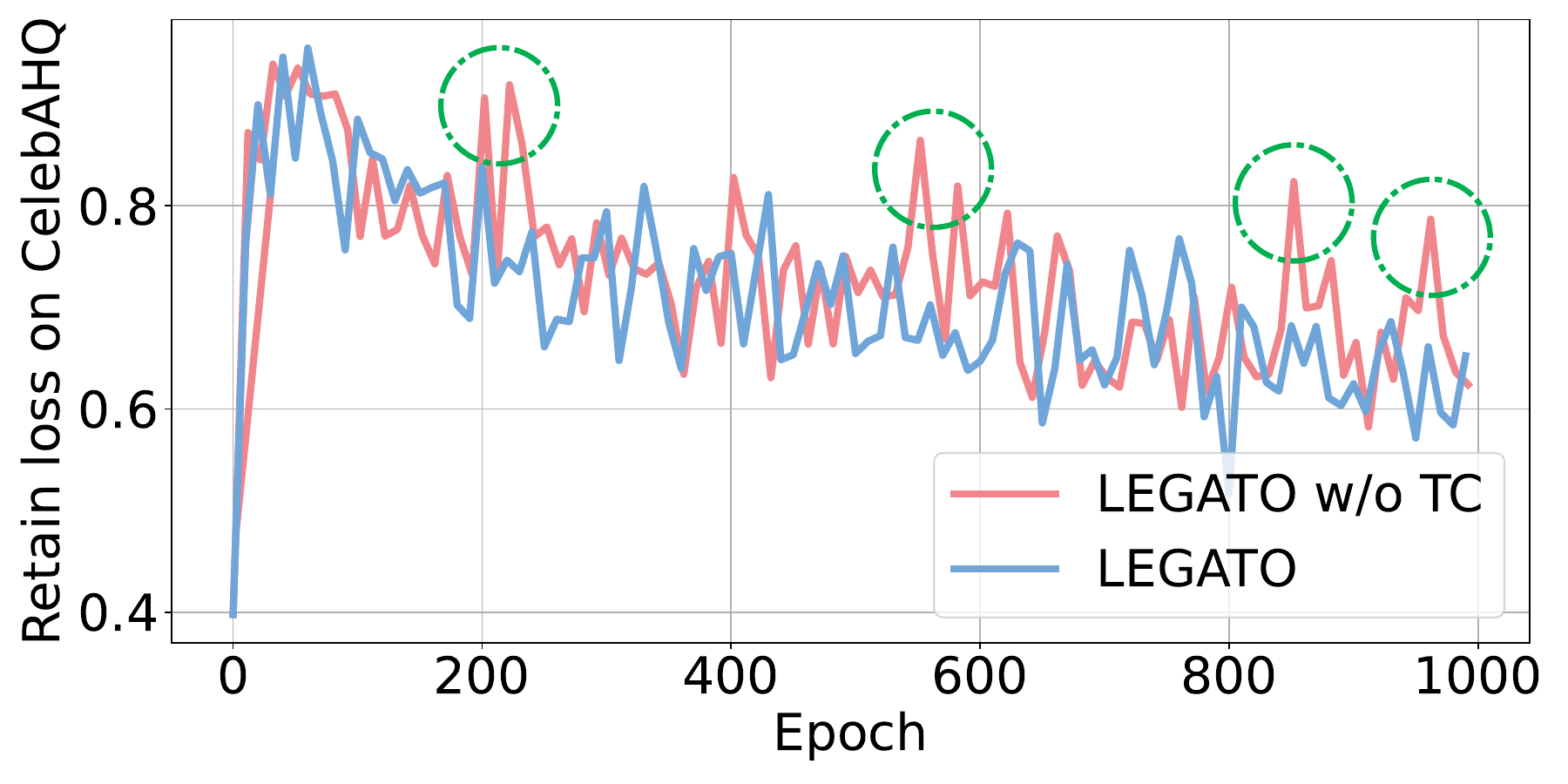}
  \caption{The Impact of TC on the retention loss.}
   \label{fig:ablation:retain:loss}
   \vspace{-10px}
\end{figure}

\begin{table}[ht]
\centering
\setlength{\tabcolsep}{3pt}
\begin{tabular}{cccccc}
\toprule
Multi-Identity & ID & \( \mathrm{ID}_{\text{avg}} \) & \( \mathrm{FID}_{\text{pre}} \) & \( \Delta \mathrm{FID}_{\text{real}} \) \\
\midrule
GUIDE-2nd  & 0.26 & 0.42 & 7.69 & 3.42 \\
LEGATO-2nd & -0.02(\text{\small+108\%}) & 0.19(\text{\small+55\%}) & 6.29 & 1.99 \\
\midrule
GUIDE-3rd  & 0.28 & 0.47 & 8.12 & 3.73 \\
LEGATO-3rd  & -0.02(\text{\small+107\%}) & 0.20(\text{\small+57\%}) & 6.34 & 1.87 \\
\bottomrule
\end{tabular}
\caption{Performance comparison of unlearning multiple identities (2 and 3 identities) on CelebAHQ dataset.}
\label{tab:comparison:multi:identity}
\vspace{-5px}
\end{table}

\subsection{Ablation and Sensitivity Studies}
\begin{table}[ht]
\centering
\begin{tabular}{cc|cccc}
\toprule
NODEs & TC & ID & \( \mathrm{ID}_{\text{avg}} \) & \( \mathrm{FID}_{\text{pre}} \) & \( \Delta \mathrm{FID}_{\text{real}} \) \\
\midrule
\ding{55}  & \ding{55} & 0.02  & 0.23 & 7.44 & 3.36 \\
\ding{51}  & \ding{55} & -0.02 & 0.16 & 6.88  & 2.20  \\
\ding{51}  & \ding{51} & 0.00  & 0.18 & \textbf{6.09}  & \textbf{1.78} \\
\bottomrule
\end{tabular}
\caption{Effectiveness of Neural ODE and Trajectory-consistent Constraint. TC represents Trajectory-consistent Constraint. We used CelebAHQ dataset in this experiment.}
\label{tab:ablation:study}
\vspace{-5px}
\end{table}

\begin{table}[t!]
\centering
\begin{tabular}{ccccccc}
\toprule
\(C_{\text{hidden}}\) & ID & \( \mathrm{ID}_{\text{avg}} \) & \( \mathrm{FID}_{\text{pre}} \) & \( \Delta \mathrm{FID}_{\text{real}} \) \\
\midrule
64   & 0.00  & 0.16  & 7.11 & 2.30 \\
128  & -0.02 & 0.25  & 6.81 & 2.41  \\
256  & 0.00  & 0.18  & \textbf{6.09} & \textbf{1.78}  \\
512  & -0.01 & 0.16  & 6.42 & 1.90  \\
\bottomrule
\end{tabular}
\caption{Comparison of neural function with different hidden layer dimensions under multi-image test (CelebAHQ).}
\label{tab:comparison:function:dimension}
\vspace{-5px}
\end{table}

\begin{table}[t!]
\centering
\begin{tabular}{ccccc}
\toprule
Solver & ID & \( \mathrm{ID}_{\text{avg}} \) & \( \mathrm{FID}_{\text{pre}} \) & \( \Delta \mathrm{FID}_{\text{real}} \) \\
\midrule
euler  & \textbf{0.00} & \textbf{0.18} & \textbf{6.09} & \textbf{1.78} \\
rk4    & 0.00 & 0.19 & 6.21 & 2.16  \\
midpoint  & 0.01 & 0.19 & 6.34  & 2.30  \\
\bottomrule
\end{tabular}
\caption{Comparison of different solver in Neural ODE under multi-image test (CelebAHQ).}
\label{tab:comparison:solver}
\vspace{-10px}
\end{table}

\textbf{Ablation Result.} 
In this section, we empirically analyze the individual contributions of (1) the Neural ODE module and (2) the Trajectory-consistent Constraint within our proposed framework. The ablation results are presented comprehensively in Table \ref{tab:ablation:study}. Our findings demonstrate that incorporating and fine-tuning the Neural ODE module substantially enhances the model's forgetting capability while significantly preserving the generative performance on the retain set. A comparison between Neural ODE and the discrete layers used in the LoRA-style approach further emphasizes the superiority of Neural ODE, highlighting its ability to mitigate negative impacts on retention capability. This improvement arises from Neural ODE's smooth and gradual forgetting mechanism, coupled with explicit controllability of forgetting intensity via step size adjustments. 

Moreover, the Trajectory-consistent Constraint (TC) plays a critical role in enhancing retention performance, particularly evident in the notable reduction of \( \mathrm{FID}_{\text{pre}} \). The impact of this constraint is vividly illustrated in Figure \ref{fig:ablation:retain:loss}, which depicts how trajectory consistency significantly improves stability during the final convergence phase (epochs 800 to 1000). Overall, these results underline the effectiveness of integrating Neural ODE and TC in achieving precise, controlled, and stable generative identity unlearning.

\textbf{Effect of hidden layer dimensionality and ODE solver.} As shown in Tables \ref{tab:comparison:function:dimension} and \ref{tab:comparison:solver}, different hidden layer dimensionalities and solver choices influence the generation capability on the retain set. Specifically, \(C_{\text{hidden}}\) = 256 yields the optimal performance among the tested dimensionalities, while the Euler solver consistently outperforms alternative solvers such as RK4 and midpoint. See supplementary material for additional ablation study results.

\section{Conclusion}
In this work, we introduce LEGATO, the first method leveraging Neural ODEs as fine-tunable adapters for generative identity unlearning, thereby avoiding the computational cost of full-model fine-tuning. Fine-tuning only the Neural ODE significantly reduces the impact on generative capability for retained data. In addition, LEGATO preserves generative quality on retained data through smooth, controllable forgetting, enhanced by our trajectory-consistent constraint that prevents catastrophic collapse. Extensive experiments confirm that LEGATO achieves state-of-the-art identity protection without compromising efficiency or performance.

\section{Acknowledgments}
This work is funded by National Natural Science Foundation of China (No. 62406347 and No. 62202302). Yi Chen was supported by the Hong Kong Research Grants Council, Early Career Scheme Fund [Grant 26508924] and Hong Kong RGC Theme-based Research Scheme T32-615-24/R. CWC and AIAR acknowledge support from the Swiss National Science Foundation (SNSF) under grant number 20HW-1 220785.AIAR gratefully acknowledges the support of the Yau Mathematical Sciences Center, Tsinghua University. This work is also supported by the Tsinghua University Dushi Program.

\appendix
\renewcommand{\thesection}{\arabic{section}}
\setcounter{equation}{0}
\setcounter{theorem}{0}
\setcounter{figure}{0}
\setcounter{table}{0}

\bibliographystyle{named}
\bibliography{ijcai26}
\urlstyle{same}

This Supplementary Material includes the complete proof of Theorem 1,Theorem 2 and Theorem 3, along with additional experimental details and results. 

\section{Proof of Theorem 1}
\begin{theorem}[Smooth Neural ODE Trajectories]
Let $f : [t_0, T] \times \mathbb{R}^d \times \Theta \longrightarrow \mathbb{R}^d, \quad (t, x, \theta) \mapsto f(t, x, \theta)$,
where \( \Theta \subseteq \mathbb{R}^p \) is an open parameter set.
Assume 
\\A1 (Local Lipschitz in \(x\)). For every compact \( K \subseteq \mathbb{R}^d \) and \( \theta \in \Theta \), there exists \( L_K = L(K, \theta) \) such that $\|f(t, x_1, \theta) - f(t, x_2, \theta)\| \leq L_K \|x_1 - x_2\| \quad \forall x_1, x_2 \in K, \; t \in [t_0, T]$. A2 Continuous in \((t, x, \theta)\). A3 (\( C^1 \) in \((x, \theta)\)). The partial derivatives \(\partial_x f\) and \(\partial_\theta f\) exist and are continuous on \([t_0, T] \times \mathbb{R}^d \times \Theta\).
Let \( \Phi_{t_0 \rightarrow t} : [t_0, T) \times \mathbb{R}^d \times \Theta \rightarrow \mathbb{R}^d \), defined by \( \Phi_{t_0 \rightarrow t}(x_0, \theta) = \varphi(t; t_0, x_0, \theta) \), be the solution map of a Neural ODE parameterized by \( \theta \). If \( f \) is Lipschitz continuous in \( x \) and continuous in \( \theta \), then \( \Phi \) is of class \( \mathcal{C}^1 \). In particular, the solution is continuous and its Jacobians \( \partial_{x_0} \Phi \) and \( \partial_{\theta} \Phi \) exist and are continuous.

\end{theorem}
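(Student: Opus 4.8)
The plan is to recognize this as the classical theorem on smooth dependence of ODE solutions on initial conditions and parameters, and to prove it in three stages: well-posedness, continuous dependence, and finally $\mathcal{C}^1$ dependence via the variational equations. First I would establish existence and uniqueness of the solution $\varphi(t; t_0, x_0, \theta)$ on a maximal interval by appealing to Picard--Lindel\"of, which uses precisely the local Lipschitz condition A1 together with the continuity A2; writing the problem in integral form $\varphi(t) = x_0 + \int_{t_0}^{t} f(s, \varphi(s), \theta)\,ds$ exposes the fixed-point structure that all later steps exploit.

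Second, I would show that $\Phi$ is continuous in $(x_0, \theta)$. The tool is Gr\"onwall's inequality: for two initial/parameter pairs, subtract their integral equations, bound the integrand on a compact tube containing both trajectories using the Lipschitz constant $L_K$ from A1, and control the parameter perturbation via uniform continuity of $f$ in $\theta$. Gr\"onwall then yields an estimate of the form $\|\varphi_1(t) - \varphi_2(t)\| \le C\,(\|x_0^1 - x_0^2\| + \|\theta_1 - \theta_2\|)\,e^{L_K (t - t_0)}$, which gives (Lipschitz) continuity of the flow.

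Third, and this is the technical heart, I would prove differentiability. The candidate derivatives are the solutions of the variational equations obtained by formally differentiating $\dot\varphi = f(t,\varphi,\theta)$: the matrix $J := \partial_{x_0}\varphi$ should solve the linear equation $\dot J = \partial_x f(t,\varphi,\theta)\,J$ with $J(t_0) = I$, while $M := \partial_\theta \varphi$ should solve $\dot M = \partial_x f(t,\varphi,\theta)\,M + \partial_\theta f(t,\varphi,\theta)$ with $M(t_0) = 0$. By A3 the coefficient $\partial_x f(t,\varphi(t),\theta)$ is continuous in $t$ and jointly continuous in $(x_0,\theta)$, so each of these linear ODEs has a unique solution depending continuously on all data.

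The main obstacle is the rigorous identification of these candidates as the actual partial derivatives of $\Phi$, i.e.\ showing that the difference quotient $h^{-1}\bigl[\varphi(t; x_0 + h v, \theta) - \varphi(t; x_0, \theta)\bigr]$ converges, uniformly on compact $t$-intervals, to $J(t)\,v$. I would handle this by subtracting the two integral equations, inserting $\pm\,\partial_x f(s,\varphi,\theta)$ applied to the solution difference, and using the mean-value form of the remainder together with the uniform continuity of $\partial_x f$ to show the remainder term is $o(h)$; one last application of Gr\"onwall then converts this into uniform convergence of the quotient to $Jv$, and the parameter derivative $M$ follows from the identical template. Continuity of $J$ and $M$ in $(x_0,\theta)$ --- hence continuity of $\partial_{x_0}\Phi$ and $\partial_\theta\Phi$ --- comes from continuous dependence of the linear variational ODEs on their continuously varying coefficients, upgrading the result from merely differentiable to $\mathcal{C}^1$. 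A cleaner alternative would collapse this last stage into a single application of the implicit function theorem in the Banach space $C([t_0,t_1];\mathbb{R}^d)$, viewing $\varphi$ as the fixed point of the Picard operator and differentiating that operator using A3; but the variational-equation route is more self-contained and makes the Jacobians $\partial_{x_0}\Phi$ and $\partial_\theta\Phi$ explicit.
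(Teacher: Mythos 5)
Your proposal is correct and follows essentially the same route as the paper's own proof: Picard--Lindel\"of for well-posedness, Gr\"onwall plus continuous parameter dependence for $C^0$ regularity of the flow, and the variational equations $\dot J_x = \partial_x f\, J_x$, $\dot J_\theta = \partial_x f\, J_\theta + \partial_\theta f$ for the Jacobians. If anything, your treatment of the difference-quotient convergence is more careful than the paper's, which simply differentiates the IVP formally and asserts that the solutions of the variational equations are the derivatives.
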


\begin{proof}
    Because $f$ is continuous (assumption A2) and locally Lipschitz (Assumption A1) in $x$, the Picard–Lindelöf theorem yields, for every $(x_{0},\theta)\in\mathbb{R}^{d}\times\Theta$, a unique trajectory 
\begin{equation}
\varphi(\cdot;t_{0},x_{0},\theta)\in C^{1}\bigl([t_{0},T],\mathbb{R}^{d}\bigr)
\end{equation}
that solves the ODE.  
To establish continuity of $\Phi_{t_{0}\!\to t}$, fix $\theta$ and apply Grönwall’s inequality gives
\begin{equation}
\lVert\varphi(t;x_{0},\theta)-\varphi(t;x_{0}',\theta)\rVert
      \le e^{L(t-t_{0})}\lVert x_{0}-x_{0}'\rVert
\end{equation},
hence the flow depends Lipschitz-continuously on the initial state. Morover,
Because $f$ is continuous in $\theta$ and locally Lipschitz in $x$ uniformly in $\theta$,  the Continuous Parameter Dependence Theorem yields
\begin{equation}
  \bigl\lVert \varphi(t; x_0, \theta) - \varphi(t; x_0, \theta') \bigr\rVert
  \xrightarrow[\theta' \to \theta]{} 0 .
\end{equation}
is uniformly for  $t \in [t_0, T]$.
Therefore 
\begin{equation}
\Phi_{t_0 \to t} \in C^0(\mathbb{R}^d \times \Theta, \mathbb{R}^d).
\end{equation}

For differentiability, denote $J_{x}(t):=\partial_{x_{0}}\varphi(t)$ and differentiate the IVP with respect to $x_{0}$ to obtain the variational equation
\begin{equation}
\dot J_{x}(t)=\partial_{x}f\bigl(t,\varphi(t),\theta\bigr)J_{x}(t) \quad \textit{with} \quad J_{x}(t_{0})=I_{d} .
\end{equation} 
Because $\partial_{x} f$ is continuous, the same existence/uniqueness argument shows $J_{x}(t)$ exists and is continuous in $(x_{0},\theta)$. Hence $\Phi_{t_{0} \to t}$ is $C^{1}$ in $x_{0}$.

Similarly, writing $J_{\theta}(t):=\partial_{\theta}\varphi(t)$ and differentiating the IVP in $\theta$ gives
\begin{equation}
\dot J_{\theta}(t)=\partial_{x}f\bigl(t,\varphi(t),\theta\bigr)J_{\theta}(t)+\partial_{\theta}f\bigl(t,\varphi(t),\theta\bigr) .  
\end{equation}
$J_{\theta}(t_{0})=0_{d\times p}$.
This linear non‑homogeneous ODE again has a unique continuous solution, giving $\varphi(t)\in C^{1}$ in $\theta$.
Joint continuity of $J_{x},\, J_{\theta}$ follows from the coefficients’ continuity.
This completes the proof.
\end{proof}

\section{Proof of Theorem 2}

\begin{theorem}[Non-Monotonicity of Step Size] 
Consider a Neural Ordinary Differential Equation (Neural ODE) implemented via an explicit numerical solver (e.g., Forward Euler) for a continuous unlearning process. The hidden state evolves as
\begin{equation}
h_{k+1} = h_k + \Delta t \, f(h_k, \theta_k),
\end{equation}
where $f$ is a Lipschitz-continuous vector field and $\theta_k$ is updated using stochastic gradient descent (SGD).

Let the overall performance metric be defined as
\begin{equation}
\mathcal{J}(\Delta t) = \mathcal{F}(\Delta t) + \mathcal{R}(\Delta t),
\end{equation}
where $\mathcal{F}(\Delta t)$ measures the forgetting performance ($ID$)  and $\mathcal{R}(\Delta t)$ measures retention performance ($\Delta \mathrm{FID}_{\text{real}}, \mathrm{FID}_{\text{pre}}$).

Then, under mild regularity assumptions, $\mathcal{J}(\Delta t)$ is a non-monotonic function of the step size $\Delta t$, and there exists a non-empty interval
\begin{equation}
\Delta t \in [\Delta t_{\min}, \Delta t_{\max}],
\end{equation}
within which the forgetting--retention trade-off is optimal.
\end{theorem}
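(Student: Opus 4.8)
The plan is to treat the three trained quantities as scalar functions of the step size on a compact window $[\underline a,\overline a]\subset(0,\infty)$ and to show that $\mathcal J=\mathcal F+\mathcal R$ strictly decreases at the left end and strictly increases at the right end, so that it is non-monotone and attains an interior minimum. First I would pin down the discrete model: Forward Euler with a fixed number $K$ of steps, $h_{k+1}=h_k+\Delta t\,f(h_k,\theta_k)$, with $\theta$ trained by SGD under a fixed budget, so that the terminal state $h_K$ and hence $\mathcal F(\Delta t)$ (the ID metric) and $\mathcal R(\Delta t)$ (the FID metrics) are well-defined functions of the single scalar $\Delta t$. Using the smoothness of the Euler solution map established in Theorem~1 together with the assumed regularity of the loss, I would record that $\mathcal F,\mathcal R\in C^1$, hence $\mathcal J\in C^1$; this is what licenses the derivative and intermediate-value arguments that follow.

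Next I would decompose the retention cost into two competing mechanisms and show each is monotone in opposite directions. For the \emph{discretization} mechanism, the first-order consistency of Forward Euler and the Lipschitz bound on $f$ (which, via Theorem~1, gives a $C^1$ exact flow with bounded curvature on compacts) yield a global deviation from the intended continuous flow that is increasing in $\Delta t$; this term drives retention loss \emph{up} for large steps, so its contribution $\mathcal R_{\mathrm{disc}}$ satisfies $\mathcal R_{\mathrm{disc}}'>0$ and dominates near $\overline a$. For the \emph{optimization-noise} mechanism I would posit, as the ``mild regularity'' hypothesis, that the expected excess retention loss from SGD is decreasing in $\Delta t$ over the window — intuitively because the useful descent signal along the retain objective scales with the Euler update magnitude $\Delta t$ while the mini-batch noise floor does not vanish commensurately, so the effective signal-to-noise ratio, and hence the attainable retention quality, improves as $\Delta t$ grows from the small end. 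Summing, $\mathcal R=\mathcal R_{\mathrm{disc}}+\mathcal R_{\mathrm{noise}}$ is U-shaped, with $\mathcal R'(\underline a)<0$ and $\mathcal R'(\overline a)>0$.

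I would then handle the forgetting term. A larger step size produces a larger cumulative transport $\lVert h_K-h_0\rVert=\Delta t\,\lVert\sum_k f(h_k,\theta_k)\rVert$, pushing the source identity further from itself; under a monotone link between transport magnitude and the ID similarity, $\mathcal F$ is non-increasing in $\Delta t$ with a saturating (bounded) marginal gain, so $\mathcal F'\le 0$ and $\mathcal F'$ stays bounded. Combining, $\mathcal J'=\mathcal F'+\mathcal R'$ is strictly negative near $\underline a$ (both terms push down) and strictly positive near $\overline a$ (the growing discretization penalty outpaces the saturated forgetting gain). Hence $\mathcal J$ is non-monotone; since $\mathcal J\in C^1$ with $\mathcal J'$ changing sign, the intermediate value theorem furnishes an interior critical point $\Delta t^{*}$ at which $\mathcal J$ attains its minimum, and because $\mathcal J$ is decreasing-then-increasing (quasi-convex) the sublevel set $\{\Delta t:\mathcal J(\Delta t)\le \mathcal J(\Delta t^{*})+\varepsilon\}$ is itself a non-degenerate interval $[\Delta t_{\min},\Delta t_{\max}]$, the claimed optimal window.

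The step I expect to be the main obstacle is the optimization-noise term $\mathcal R_{\mathrm{noise}}$. Unlike the discretization bound, which is classical numerical analysis plus Theorem~1, the small-step penalty requires a precise model of how mini-batch gradient variance couples to $\Delta t$ through the Euler Jacobian, and a naive count (signal and noise both scaling with $\Delta t$) would leave the signal-to-noise ratio flat. I would therefore discharge this as the explicit ``mild regularity'' assumption — that the excess retention loss from stochastic training is strictly decreasing in $\Delta t$ across the window, whether because the retain-direction signal-to-noise ratio improves or because finite-budget training with vanishing steps under-transports and compromises the retain alignment. Everything downstream (consistency, a Gr\"onwall-type control of the flow, the intermediate value theorem, and connectedness of the sublevel set) is then routine.
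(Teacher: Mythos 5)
Your proposal is correct and follows essentially the same route as the paper's own proof: the paper likewise argues that retention degrades at small $\Delta t$ because the signal-to-noise ratio of the Euler update scales as $\mathrm{SNR}(\Delta t)\propto\Delta t$ (noise-dominated regime) and degrades at large $\Delta t$ through discretization/stability error, then invokes continuity of $\mathcal{J}$ and the extreme value theorem to obtain an interior optimum. Your version is marginally more explicit — the $C^1$ bookkeeping, the sign-of-derivative argument at the endpoints, and realizing $[\Delta t_{\min},\Delta t_{\max}]$ as a sublevel set of a quasi-convex $\mathcal{J}$ — and you correctly identify the small-step noise coupling as the claim that must be discharged as the "mild regularity" assumption, which is exactly the step the paper also asserts rather than derives.
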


\paragraph{Assumptions.}
We make the following standard assumptions:

\begin{itemize}
    \item \textbf{A1 (Vector Field Regularity).} The function $f(h,\theta)$ is $L$-Lipschitz continuous with respect to $h$.
    \item \textbf{A2 (Stochastic Optimization Noise).} The parameter update follows
    \begin{equation}
    \theta_{k+1} = \theta_k - \eta \bigl(\nabla_\theta \mathcal{L} + \varepsilon_k\bigr),
    \end{equation}
    where $\mathbb{E}[\varepsilon_k] = 0$ and $\mathbb{E}\|\varepsilon_k\|^2 = \sigma^2$.
    \item \textbf{A3 (Finite Integration Horizon).} The total integration time $T$ is fixed, and the number of steps satisfies $N = T / \Delta t$.
\end{itemize}

\begin{proof}
We analyze the behavior of $\mathcal{F}(\Delta t)$ and $\mathcal{R}(\Delta t)$ in different step-size regimes.

\paragraph{Effect of small step size.}
The total state evolution over time $T$ can be written as
\begin{equation}
h(T) - h(0) = \sum_{k=0}^{N-1} \Delta t \, f(h_k, \theta_k).
\end{equation}
When $\Delta t$ is extremely small, the per-step deterministic update $\|\Delta t f(h_k)\|$ becomes negligible. Meanwhile, stochastic fluctuations induced by SGD and numerical discretization do not scale proportionally with $\Delta t$. As a result, the signal-to-noise ratio satisfies
\begin{equation}
\mathrm{SNR}(\Delta t) \propto \Delta t,
\end{equation}
which approaches zero as $\Delta t \to 0$. Consequently, the system enters a noise-dominated regime, leading to oscillatory local updates and degraded retention performance. Therefore,
\begin{equation}
\lim_{\Delta t \to 0} \mathcal{R}(\Delta t) \text{ increases}.
\end{equation}

\paragraph{Effect of large step size.}
When $\Delta t$ is large, the numerical integration error and discretization instability increase. Explicit solvers may violate stability conditions, causing the trajectory to deviate significantly from the smooth ODE flow and from the original generative manifold. This results in substantial degradation of retention capability, implying
\begin{equation}
\lim_{\Delta t \to \infty} \mathcal{R}(\Delta t) \to \infty.
\end{equation}

\paragraph{Non-monotonicity and optimal interval.}
The forgetting performance $\mathcal{F}(\Delta t)$ deteriorates for excessively small step sizes due to insufficient effective state evolution, while retention performance $\mathcal{R}(\Delta t)$ deteriorates for both excessively small and excessively large step sizes. Since $\mathcal{J}(\Delta t)$ is continuous with respect to $\Delta t$, by the Weierstrass extreme value theorem, there exists at least one minimizer
\begin{equation}
\Delta t^\star \in (\Delta t_{\min}, \Delta t_{\max}),
\end{equation}
corresponding to an optimal balance between forgetting and retention. When the number of steps $N$ is fixed, the per-step update magnitude varies with $\Delta t$. As a result, the source of non-monotonicity shifts from the accumulation of noise across steps to a mismatch in the dynamical system’s update scale, and the conclusion still holds. This completes the proof.
\end{proof}

\section{Proof of Theorem 3}

\begin{theorem}[Conflict-free Multi-Identity Unlearning]
\label{supp:thm:nonconflict}
Under Assumptions A1--A3, for any two distinct identities $i \neq j$ and any initial representations
$h_i(0) \in \mathcal{M}_i$, $h_j(0) \in \mathcal{M}_j$, the Neural ODE flow satisfies:

\begin{enumerate}
    \item \textbf{Trajectory Non-Intersection:}
    \[
    \Phi_t(h_i(0)) \neq \Phi_t(h_j(0)), \quad \forall t \in [0,T].
    \]

    \item \textbf{Manifold Non-Overlap:}
    \[
    \Phi_t(\mathcal{M}_i) \cap \Phi_t(\mathcal{M}_j) = \varnothing, \quad \forall t \in [0,T].
    \]

    \item \textbf{Forgetting--Retention Decoupling:}
    If $i \notin \mathcal{F}$, then
    \[
    \Phi_t(\mathcal{M}_i) \subset \mathcal{U}_i, \quad \forall t \in [0,T].
    \]
\end{enumerate}
\end{theorem}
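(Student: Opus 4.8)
The plan is to reduce all three claims to the single structural fact that a Lipschitz-driven Neural ODE induces, for each fixed $t \in [0,T]$, a flow map $\Phi_t$ that is a diffeomorphism of $\mathbb{R}^d$, and in particular a bijection. By Assumption A1 the vector field $f$ is (locally) Lipschitz in the state, so Picard–Lindelöf together with Theorem 1 guarantees that each initial condition generates a unique $C^1$ trajectory and that $\Phi_t$ is $C^1$ with a $C^1$ inverse obtained by integrating the system backward in time. I would first record this invertibility explicitly, since injectivity of $\Phi_t$ is the workhorse for Parts 1 and 2.

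For Part 1, I would argue by contradiction using backward uniqueness. Suppose $\Phi_s(h_i(0)) = \Phi_s(h_j(0))$ for some $s \in [0,T]$; applying the backward flow to both sides yields $h_i(0) = h_j(0)$, contradicting $h_i(0) \in \mathcal{M}_i$ and $h_j(0) \in \mathcal{M}_j$ together with the pairwise disjointness of the identity manifolds assumed in A2. Part 2 then follows from the set-theoretic identity for injective maps: since $\Phi_t$ is a bijection, $\Phi_t(\mathcal{M}_i) \cap \Phi_t(\mathcal{M}_j) = \Phi_t(\mathcal{M}_i \cap \mathcal{M}_j)$, and A2 gives $\mathcal{M}_i \cap \mathcal{M}_j = \varnothing$, so the intersection of the images equals $\Phi_t(\varnothing) = \varnothing$ for every $t \in [0,T]$.

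Part 3 is the step I expect to be the main obstacle, because it does not follow from ODE uniqueness alone: it requires the geometric content of A3, namely that for a retained identity $i \notin \mathcal{F}$ the region $\mathcal{U}_i \supseteq \mathcal{M}_i$ is rendered positively invariant by the trained vector field (intuitively, the retain loss forces $f$ to vanish, or at least point inward, on $\mathcal{U}_i$). I would establish $\Phi_t(\mathcal{M}_i) \subset \mathcal{U}_i$ through a forward-invariance argument, invoking a Nagumo-type subtangentiality condition on $\partial \mathcal{U}_i$ — or, in the simplest reading of A3 where $f \equiv 0$ on $\mathcal{U}_i$ so that $\Phi_t$ is the identity there — to show that no trajectory starting in $\mathcal{U}_i$ can cross $\partial \mathcal{U}_i$ on $[0,T]$. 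The delicate points are making precise the regularity of $\partial \mathcal{U}_i$ needed for the subtangency argument and ensuring global existence of solutions up to the fixed horizon $T$, both of which I would fold into the statements of A1 and A3. I would close with a remark that the theorem concerns the continuous flow $\Phi_t$, and that the same conclusions transfer to the explicit-Euler discretization up to an $O(\Delta t)$ error, consistent with Theorem 2.
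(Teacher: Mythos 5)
Your argument for Parts 1 and 2 is essentially the paper's own: Picard--Lindel\"of plus backward uniqueness gives injectivity of $\Phi_t$, a contradiction argument handles trajectory non-intersection, and injectivity applied to the disjoint manifolds $\mathcal{M}_i, \mathcal{M}_j$ gives manifold non-overlap. Where you genuinely diverge is Part 3, and there your instinct is sharper than the paper's proof. The paper argues that since A3 forces $f = f_0$ on the retained region $\mathcal{U}_i$, ``the trajectory remains within $\mathcal{U}_i$ by continuity and disjointness of neighborhoods'' --- but this does not follow: coinciding with the \emph{original} vector field on $\mathcal{U}_i$ says nothing about whether the original flow keeps $\mathcal{U}_i$ forward-invariant, and a trajectory starting in $\mathcal{M}_i$ could perfectly well exit $\mathcal{U}_i$ under $f_0$ itself. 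Your proposal to impose a Nagumo-type subtangentiality condition on $\partial\mathcal{U}_i$ (or the stronger reading that $f$ vanishes or points inward on $\mathcal{U}_i$) is exactly the missing hypothesis needed to make Part 3 rigorous; as written, the paper's A1--A3 do not suffice for the third claim, and your version of the assumptions would repair it. The only caveat is that you should present this as a strengthening of A3 rather than an interpretation of it, since the paper's A3 as stated is purely about where the vector field is \emph{modified}, not about invariance.
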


\paragraph{Problem Setup.}
Let $\mathcal{H} \subset \mathbb{R}^d$ denote the latent (representation) space of a generative model.
Assume there exist $K$ distinct identities, each associated with a compact submanifold
\[
\mathcal{M}_k \subset \mathcal{H}, \quad k = 1,\dots,K,
\]
such that
\[
\mathcal{M}_i \cap \mathcal{M}_j = \varnothing, \quad \forall i \neq j.
\]

Each point $h \in \mathcal{M}_k$ is referred to as an \emph{identity representation}, meaning that
identity-related semantic information is encoded in the internal latent or feature representation $h$.

We model unlearning as a continuous-time dynamical system defined by a Neural Ordinary Differential Equation (Neural ODE):
\begin{equation}
\frac{d h(t)}{dt} = f(h(t), t; \theta), \quad h(0) = h_0,
\end{equation}
where $f: \mathcal{H} \times [0,T] \to \mathcal{H}$ is a neural vector field parameterized by $\theta$.

Let $\Phi_t : \mathcal{H} \to \mathcal{H}$ denote the solution (flow) map of the ODE such that
\[
\Phi_t(h_0) = h(t).
\]

\paragraph{Assumptions.} We make the following standard assumptions: 

\begin{itemize}
    \item \textbf{A1 (Lipschitz Continuity).}
    For each $t \in [0,T]$, the vector field $f(\cdot,t;\theta)$ is globally Lipschitz in $h$, i.e.,
    \[
    \| f(h_1,t) - f(h_2,t) \| \le L \| h_1 - h_2 \|, \quad \forall h_1,h_2 \in \mathcal{H}.
    \]

    \item \textbf{A2 (Identity Locality).}
    There exist disjoint open neighborhoods $\{\mathcal{U}_k\}_{k=1}^K$ such that
    \[
    \mathcal{M}_k \subset \mathcal{U}_k, \quad
    \mathcal{U}_i \cap \mathcal{U}_j = \varnothing \ \text{for } i \neq j.
    \]

    \item \textbf{A3 (Localized Unlearning).}
    The unlearning process modifies the vector field only inside forgotten identity regions:
    \[
    f(h,t;\theta) = f_0(h,t), \quad \forall h \notin \bigcup_{k \in \mathcal{F}} \mathcal{U}_k,
    \]
    where $\mathcal{F} \subset \{1,\dots,K\}$ denotes the set of identities to be forgotten.
\end{itemize}

\begin{figure*}[t!]
  \centering
  \includegraphics[width=\linewidth]{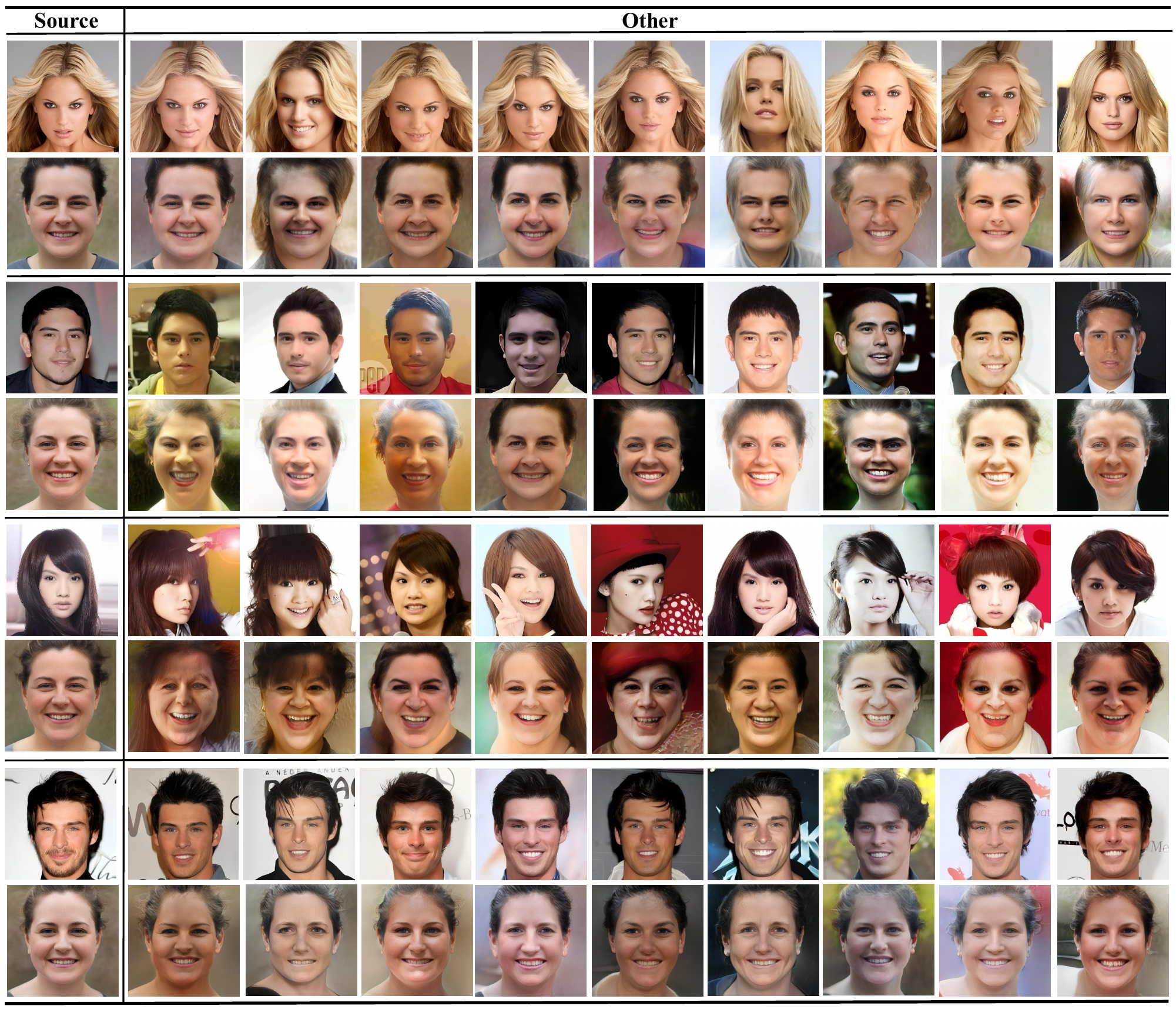}
  \caption{Qualitative results of LEGATO in generative identity unlearning task. For each identity in the CelebAHQ dataset, the first row shows the source image and other images of the same identity, and the second row displays the results after forgetting the specific identity. The identities are sequentially 1784, 3478, 7901 and 55.}
   \label{supp:fig:all:identity}
\end{figure*}

\begin{proof} We now proceed to analyze Theorem 3.
\paragraph{Existence and Uniqueness.}
By Assumption A1, the vector field $f$ is Lipschitz in $h$.
By the Picard--Lindelöf theorem, the ODE admits a unique solution for any initial condition on $[0,T]$.

\paragraph{Trajectory Non-Intersection.}
Assume for contradiction that there exists $t^\ast \in [0,T]$ such that
\[
\Phi_{t^\ast}(h_i(0)) = \Phi_{t^\ast}(h_j(0)).
\]
By uniqueness of solutions, this implies $h_i(0) = h_j(0)$, which contradicts
$\mathcal{M}_i \cap \mathcal{M}_j = \varnothing$.
Hence, trajectories cannot intersect.

\paragraph{Manifold Non-Overlap.}
The flow map $\Phi_t$ depends continuously on initial conditions.
Since $\Phi_t$ is injective and $\mathcal{M}_i, \mathcal{M}_j$ are disjoint compact sets,
their images under $\Phi_t$ remain disjoint for all $t \in [0,T]$.

\paragraph{Forgetting--Retention Decoupling.}
For any retained identity $i \notin \mathcal{F}$ and any $h \in \mathcal{U}_i$,
Assumption A3 implies the vector field coincides with the original one.
Thus, the trajectory remains within $\mathcal{U}_i$ by continuity and disjointness of neighborhoods.
\end{proof}

\section{Additional Implementation Details}
\subsection{Loss Function Design}
In this section, we present the concrete implementations of  $\mathcal{L}_u$  and  $\mathcal{L}_r$. Our forgetting loss consisting of Euclidean loss \(\mathcal{L}_2\), perceptual loss \(\mathcal{L}_{\text{per}}\) \cite{zhang2018unreasonable}, and identity loss \(\mathcal{L}_{\text{id}}\) \cite{deng2019arcface} is defined as:
\begin{equation}
\label{Eq:Unlearning:Loss}
\begin{aligned}
&\mathcal{L}_{\text{u}} = \mathcal{L}_{\text{local}} + \mathcal{L}_{\text{adj}}, \\
&\mathcal{L}_{\text{local}}(\hat{x}_u, \hat{x}_t) = \lambda_{\text{L2}} \mathcal{L}_2(F_u, F_t) + \lambda_{\text{per}} \mathcal{L}_{\text{per}}(\hat{x}_u, \hat{x}_t) \\
&\quad \quad \quad \quad \quad \quad  + \lambda_{\text{id}}\mathcal{L}_{\text{id}}(\hat{x}_u, \hat{x}_t) , \\
&\mathcal{L}_{\text{adj}}(w_u, w_t) = \frac{1}{N_a} \sum_{i=1}^{N_a} \mathcal{L}_{\text{local}}(\hat{x}_{u,a}^i, \hat{x}_{t,a}^i),
\end{aligned}
\end{equation}
where \(F_u = G_u(w_u)\) and \(F_t = G_s(w_t)\) are the tri-plane features of the backbone, $\hat{x}_{u} = R(F_{u})$ denotes the image reconstructed by the unlearned model from the source latent code, and $\hat{x}_{t} = R(F_{t})$ denotes the image reconstructed from the target latent code. $\hat{x}_{u,a}^{i}$ and $\hat{x}_{t,a}^{i}$ are the corresponding images reconstructed from their neighboring latent code.

To preserve the generative ability of other identities while forgetting a specific identity, we adopt the following form of retain loss:
\begin{equation}
\label{Eq:Unlearning:Retain:Loss}
\begin{aligned}
&\mathcal{L}_{\text{r}} = \frac{1}{N_r} \sum_{i=1}^{N_r} \mathcal{L}_{per}(\hat{x}_{u,r}^i, \hat{x}_{s,r}^i), \\
&\hat{x}_{u,r}^i=R(G_u(w_{r,a}^i);c), \hat{x}_{s,r}^i=R(G_s(w_{r,a}^i);c),
\end{aligned}
\end{equation}
where \(w_{r,a}^i\) is sampled from a random noise vector \(z_{r,a}\) and \(N_r\) denotes the number of samples, serving as the size of the retain set. $\hat{x}_{u,r}^i$ and $\hat{x}_{s,r}^i$ are obtained from the unlearned and pre-trained generator, respectively.

\subsection{Hyperparameter Settings}
In this section, we provide a detailed explanation of some hyperparameters in the model. The neural function of the Neural ODE consists of two 1×1 convolutional layers with \(C_{\text{hidden}}=256\). The step size and the number of steps used in the Neural ODE solver are set to 0.4 and 4, respectively. To ensure a fair comparison, the hyperparameters, including \(a_{max}, N_a, N_r, \lambda_{\text{L2}}, \lambda_{\text{per}}\) and \(\lambda_{\text{id}}\) are set to the same values as in GUIDE. Please refer to Table \ref{supp:tab:hyperparameters:details} for the specific values, where the "value" column shows the values used in LEGATO, and the "range" column presents the values used in the ablation studies.

\begin{table}[ht]
\centering
\setlength{\tabcolsep}{8pt}
\begin{tabular}{c|cc}
\toprule
Hyperparameter & Value & Range \\
\midrule
$ d $ & 30 & [-30, 0, 10, 30, 60] \\
$ \alpha_{max} $ & 15 & - \\
$ N_a $ & 2 & [1,2,4] \\
$ N_r $ & 2 & [1,2,4] \\
$ \lambda_{\text{id}} $ & 0.1 & [1e-2, 0.1, 1.0] \\
$ \lambda_{\text{per}} $ & 1.0 & [1e-2, 0.1, 1.0] \\
$ \lambda_{\text{L2}} $ & 1e-2 & [1e-2, 0.1, 1.0] \\
\bottomrule
\end{tabular}
\caption{The hyperparameter settings in LEGATO.}
\label{supp:tab:hyperparameters:details}
\end{table}

For the parameter initialization of the neural function in Neural ODE, we adopt an initialization method similar to LoRA. We initialize the first convolution using Kaiming uniform initialization and zero-initialize the final convolutional layer to ensure the module initially acts as an identity mapping, facilitating stable and non-disruptive fine-tuning. The Adam optimizer is used across all experiments, regardless of the learning rate.

\begin{table}[t!]
\centering
\setlength{\tabcolsep}{6pt}
\begin{tabular}{c|cccc}
\toprule
\multirow{2}{*}{Identity} & \multicolumn{4}{c}{Out-of-Domain (CelebAHQ)} \\
\cmidrule(lr){2-5}
& ID $\downarrow$ & \( \mathrm{ID}_{\text{avg}} \) $\downarrow$ & \( \mathrm{FID}_{\text{pre}} \) $\downarrow$ & \( \Delta \mathrm{FID}_{\text{real}} \) $\downarrow$ \\
\midrule
1784   & -0.06  & 0.13 & 6.14 & 2.35 \\
3478   & -0.04  & 0.19 & 6.24 & 1.08 \\
7901   & 0.00   & 0.20 & 6.23 & 1.92 \\
55     & -0.01  & 0.22 & 6.93 & 1.89 \\
\bottomrule
\end{tabular}
\caption{Quantitative results of LEGATO under different identity in the generative identity unlearning task.}
\label{supp:tab:ablation:LEGATO:identity}
\end{table}

\begin{table}[t!]
\centering
\begin{tabular}{ccccccc}
\toprule
\(N_a\) & ID & \( \mathrm{ID}_{\text{avg}} \) & \( \mathrm{FID}_{\text{pre}} \) & \( \Delta \mathrm{FID}_{\text{real}} \) \\
\midrule
1  & -0.01  & 0.17   & 6.02  & 1.79 \\
2  & 0.00   & 0.18   & 6.09  & 1.78 \\
4  & 0.00   & 0.16   & 7.76  & 2.34  \\
\bottomrule
\end{tabular}
\caption{Comparison of different \(N_a\) under multi-image test. We used CelebAHQ dataset in this study, and keep \(N_g = 2\).}
\label{tab:comparison:ablation:Na}
\end{table}

\begin{figure}[t!]
  \centering
  \includegraphics[width=0.8\linewidth]{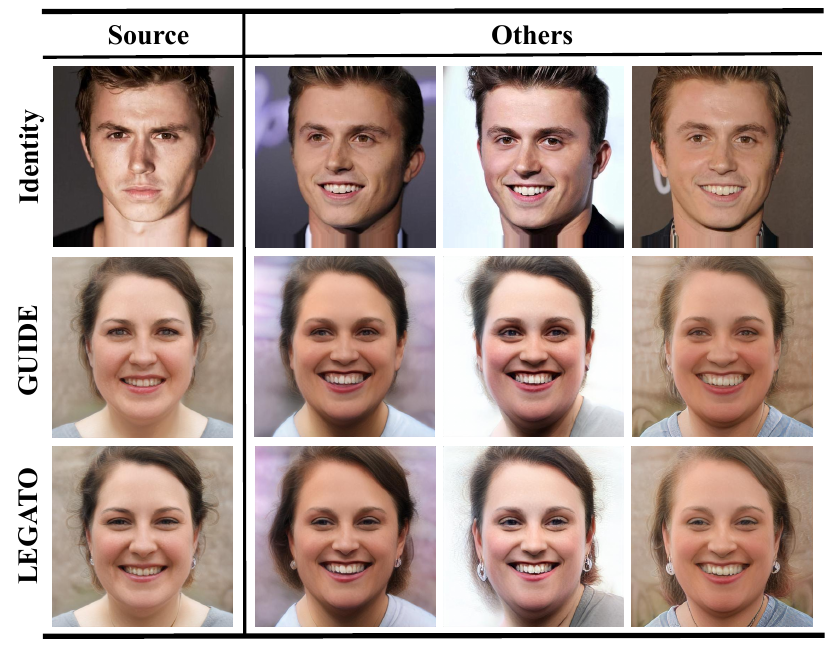}
  \caption{Qualitative results of LEGATO and the baseline on a multi-image test using CelebAHQ dataset. 
  }
   \label{fig:overall:result:forget:id1}
   \vspace{-10px}
\end{figure}

\begin{figure*}[t!]
  \centering
  \includegraphics[width=\linewidth]{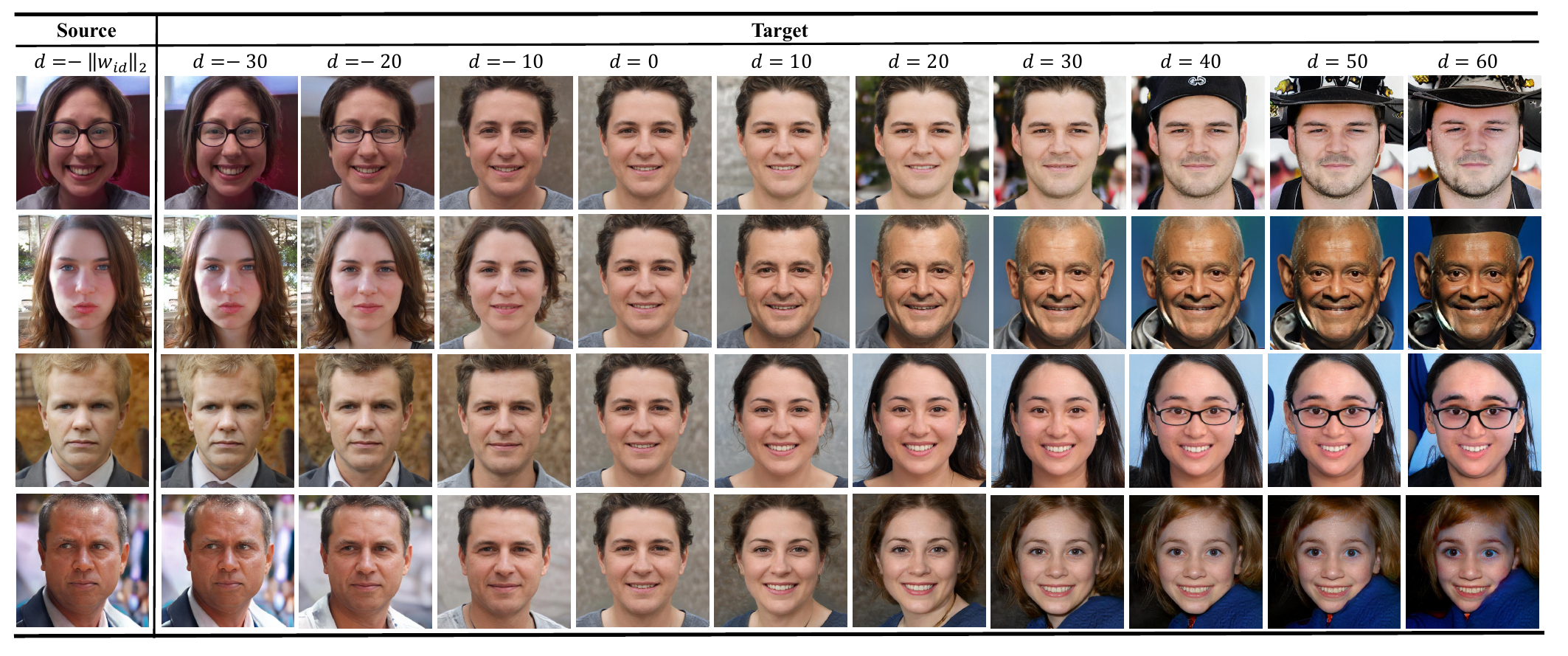}
  \caption{Illustration of target images from source images with different \(d\) in Random scenario.}
   \label{supp:fig:d:random}
\end{figure*}

\begin{figure*}[t!]
  \centering
  \includegraphics[width=\linewidth]{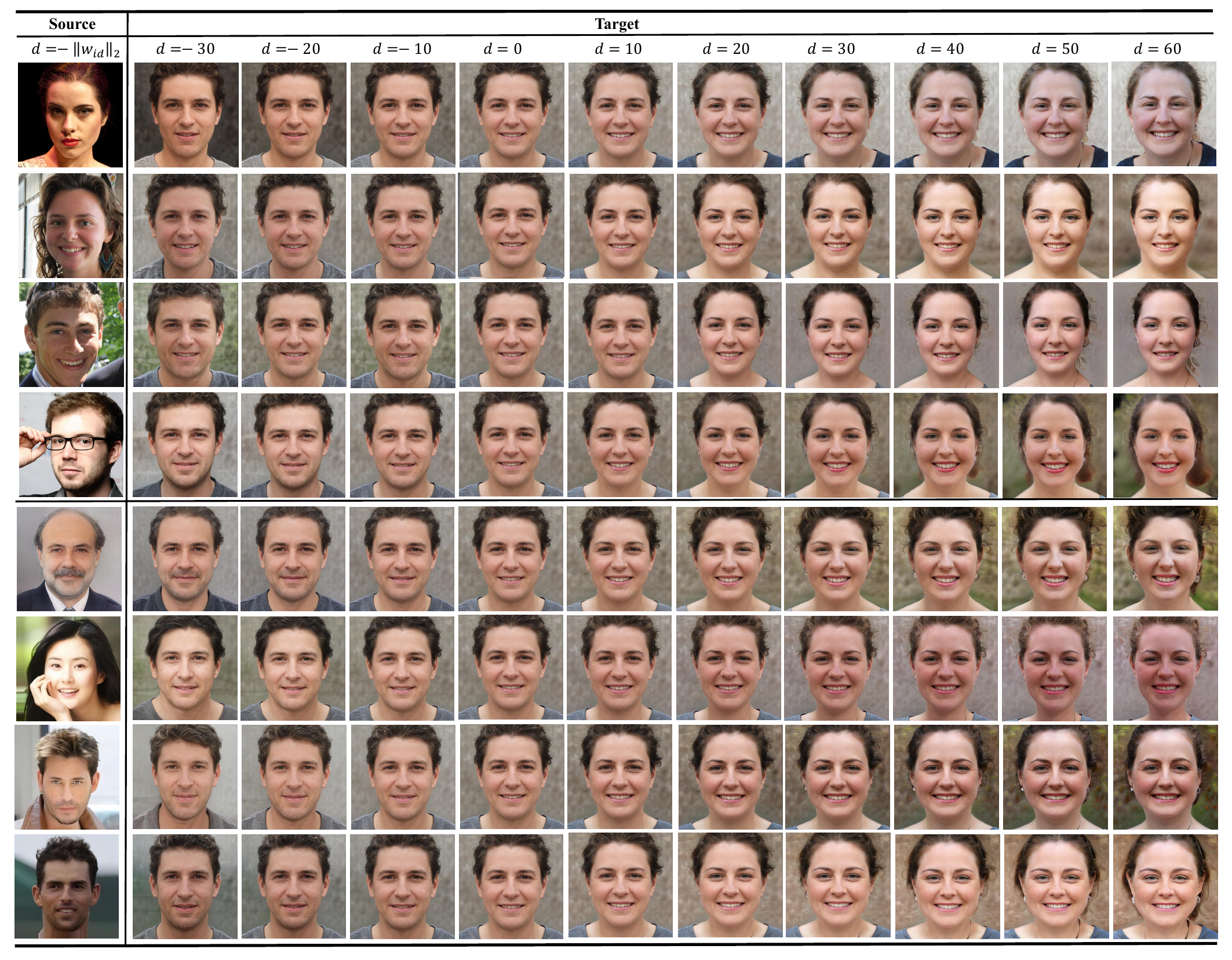}
  \caption{Illustration of target images from source images with different \(d\) in In-domain (FFHQ) and Out-of-domain (CelebAHQ) scenario.}
  \label{supp:fig:d:FFHQ}
  \vspace{10px}
\end{figure*}

\begin{figure*}[t!]
  \centering
  \includegraphics[width=\linewidth]{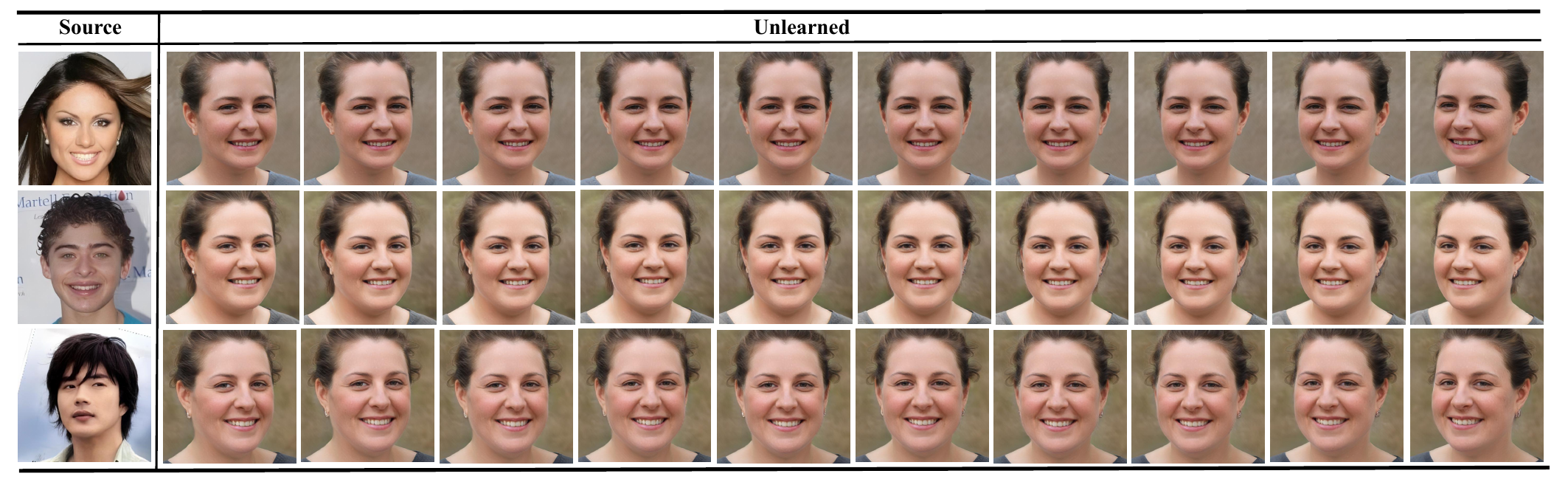}
  \caption{Unlearning results from different views in Out-of-domain (CelebAHQ) scenario.}
   \label{supp:fig:multi:view}
   \vspace{10px}
\end{figure*}

\begin{table}[t!]
\centering
\setlength{\tabcolsep}{6pt}
\begin{tabular}{c|cccc}
\toprule
\multirow{2}{*}{\textit{d}} & \multicolumn{4}{c}{Out-of-Domain (CelebAHQ)} \\
\cmidrule(lr){2-5}
& ID $\downarrow$ & \( \mathrm{ID}_{\text{avg}} \) $\downarrow$ & \( \mathrm{FID}_{\text{pre}} \) $\downarrow$ & \( \Delta \mathrm{FID}_{\text{real}} \) $\downarrow$ \\
\midrule
-30  & 0.22  & 0.55 & 4.13 & 1.39 \\
0    & 0.09  & 0.41 & 5.75 & 2.50 \\
10   & 0.04 & 0.36 & 6.44 & 2.86 \\
30   & 0.06 & 0.29 & 7.15 & 3.36 \\
60   & 0.05 & 0.30 & 8.94 & 3.62 \\
\bottomrule
\end{tabular}
\caption{Quantitative results of GUIDE under different \textit{d} in the generative identity unlearning task, identity id (celebAHQ) is 2161.}
\label{supp:tab:ablation:GUIDE:d}
\end{table}

For the Trajectory Consistency Constraint, we only apply it in the Neural ODE following the synthetic layer with a resolution of 128. We adopted a 128×128 rendering resolution for the triplane-based volumetric rendering module, followed by a super-resolution module that outputs final images at 512×512 resolution, consistent with the EG3D architecture built on StyleGAN2. Most of our experiments were conducted on an NVIDIA GeForce RTX 3090 GPU for 5 runs, while a small portion of experiments that exceeded the memory capacity were performed on an NVIDIA A100 GPU.

\section{Additional Experiments}

\subsection{Unlearning Results}

In this section, we present additional results of unlearning. Compared to our main paper, we used 10 images per identity in the CelebAHQ dataset, and the qualitative results are illustrated in Figure \ref{supp:fig:all:identity}. Table \ref{supp:tab:ablation:LEGATO:identity} sequentially presents the quantitative results of identity unlearning for these four identities. These results further quantitatively demonstrate that LEGATO effectively eliminates the specified identity not only in the provided source image but also across other images that share the same identity.

\begin{table}[t!]
\centering
\setlength{\tabcolsep}{6pt}
\begin{tabular}{c|cccc}
\toprule
\multirow{2}{*}{\textit{d}} & \multicolumn{4}{c}{Out-of-Domain (CelebAHQ)} \\
\cmidrule(lr){2-5}
& ID $\downarrow$ & \( \mathrm{ID}_{\text{avg}} \) $\downarrow$ & \( \mathrm{FID}_{\text{pre}} \) $\downarrow$ & \( \Delta \mathrm{FID}_{\text{real}} \) $\downarrow$ \\
\midrule
-30  & 0.18  & 0.59 & 5.09 & 1.40 \\
0    & -0.08 & 0.35 & 5.92 & 1.71 \\
10   & -0.08 & 0.31 & 6.38 & 2.05 \\
30   & 0.00 & 0.26 & 7.04 & 2.12 \\
60   & 0.09 & 0.21 & 8.25 & 2.58 \\
\bottomrule
\end{tabular}
\caption{Quantitative results of LEGATO under different \textit{d} in the generative identity unlearning task, identity id (celebAHQ) is 2161.}
\label{supp:tab:ablation:LEGATO:d}
\end{table}

\subsection{Target Images from Different \textit{d}} This section complements the main paper, “Effect of \(d\) in Determination of \(w_t\)”, by presenting additional experiments conducted on a wide range of source images. We visualized target images derived from a given source image
at multiple \(d\) values, as shown in Figures \ref{supp:fig:d:random} and \ref{supp:fig:d:FFHQ}. Our results illustrate that adjusting \(d\) allows us to get different target images. On the other hand, a smaller \(d\) leads to target images that are too similar to the source image, making unlearning difficult. In contrast, a larger \(d\) tends to distort the target images. Therefore, \(d=30\) is a reasonable choice.

Quantitative results in Tables \ref{supp:tab:ablation:GUIDE:d} and \ref{supp:tab:ablation:LEGATO:d} indicate: (1) Negative values of \(d\) can maintain the generative capability on the retain set but fail to achieve identity unlearning; (2) Excessively large \(d\) negatively impacts the generative performance on the retain set.

\subsection{Multi-View Unlearned Images}
In this section, we visualize unlearned images from continuous camera poses under the out-of-domain (CelebAHQ) scenario. As shown in Figure \ref{supp:fig:multi:view}, our unlearning process successfully erased the source identity in multiple camera poses.

\subsection{Visual Result on OOD dataset}
Figure \ref{fig:overall:result:forget:id1} presents visual results on the CelebAHQ dataset under a multi-image test, qualitatively demonstrating that LEGATO effectively achieves identity forgetting. 

\begin{table}[t!]
\centering
\begin{tabular}{ccccccc}
\toprule
\(N_g\) & ID & \( \mathrm{ID}_{\text{avg}} \) & \( \mathrm{FID}_{\text{pre}} \) & \( \Delta \mathrm{FID}_{\text{real}} \) \\
\midrule
1  & 0.01  & 0.15  & 9.40  & 3.70 \\
2  & 0.00  & 0.18  & 6.09  & 1.78 \\
4  & 0.00  & 0.21  & 5.42  & 1.29  \\
\bottomrule
\end{tabular}
\caption{Comparison of different \(N_g\) under multi-image test. We use CelebAHQ dataset in this study, and keep \(N_a = 2\).}
\label{tab:comparison:ablation:Ng}
\end{table}

\begin{table}[t!]
\centering
\begin{tabular}{ccccc}
\toprule
GUIDE & ID & \( \mathrm{ID}_{\text{avg}} \) & \( \mathrm{FID}_{\text{pre}} \) & \( \Delta \mathrm{FID}_{\text{real}} \) \\
\midrule
\(\lambda_1 = 1.0,\lambda_3 = 1.0\)   & 0.02  & 0.23 & 7.44 & 3.36 \\
\(\lambda_1 = 1.0,\lambda_3 = 0.5\)   & 0.18  & 0.34 & 8.58 & 3.99 \\
\(\lambda_1 = 1.0,\lambda_3 = 0.8\)   & 0.20  & 0.35 & 7.92 & 3.46 \\
\(\lambda_1 = 0.5,\lambda_3 = 1.0\)   & 0.22  & 0.37 & 6.71 & 2.56 \\
\(\lambda_1 = 0.8,\lambda_3 = 1.0\)   & 0.21  & 0.36 & 7.26 & 2.98 \\
\bottomrule
\end{tabular}
\caption{Comparison of different weights for final loss of GUIDE under multi-image setting (CelebAHQ). 
}
\label{tab:comparison:different:weights:guide}
\end{table}

\begin{table}[t!]
\centering
\begin{tabular}{ccccc}
\toprule
LEGATO & ID & \( \mathrm{ID}_{\text{avg}} \) & \( \mathrm{FID}_{\text{pre}} \) & \( \Delta \mathrm{FID}_{\text{real}} \) \\
\midrule
(1.0:1.0:1.0)   & 0.00  & 0.18 & 6.09 & 1.78 \\
(1.0:1.0:0.5)   & -0.01 & 0.14 & 9.33 & 3.47 \\
(0.5:1.0:1.0)   & \textbf{-0.02} & 0.17 & \textbf{5.78} & \textbf{1.60} \\
(1.0:0.5:1.0)   & 0.00  & 0.17 & 6.51 & 1.78 \\
\bottomrule
\end{tabular}
\caption{Effect of varying the loss-weight ratio \((\lambda_{1},\lambda_{2},\lambda_{3})\) in LEGATO on unlearning (ID, ID\textsubscript{avg}) and retention (FID\textsubscript{pre}, \(\Delta\)FID\textsubscript{real}) metrics under the multi-image CelebAHQ setting.  
The three numbers listed in the leftmost column are the relative weights \(\lambda_{1}:\lambda_{2}:\lambda_{3}\) used when computing the final loss.}
\label{tab:comparison:different:weights}
\end{table}

\section{Additional Ablation Study}
\subsection{Number of Latent Codes in Loss Functions}
In this section, we study the impact of \(N_a\) and \(N_r\), as shown in Table \ref{tab:comparison:ablation:Na} and \ref{tab:comparison:ablation:Ng}. The results indicate that a large \(N_a\) leads to worse generation performance on the retain set, while a larger \(N_g\) helps improve it. Moreover, \( N_a = N_g = 2 \) strikes a good balance between the unlearning and generation performance.

\subsection{Different Weights for Final Loss}

Under the same conditions as the main experiment, we investigated the impact of different weights on the final loss of GUIDE, denoted as $\mathcal{L}_{\text{GUIDE}} = \lambda_1 \mathcal{L}_{\text{u}} + \lambda_3 \mathcal{L}_{\text{r}}$. The experimental results in Table \ref{tab:comparison:different:weights:guide} demonstrate that adjusting the weights of the various terms in the GUIDE loss function does not achieve a better trade-off or improved interpretability. This is why we need a better unlearning model (LEGATO) to achieve a better trade-off and improved interpretability, avoiding negative impacts on the identity generation capability of the retained set while maintaining the forgetting ability.

Under the same conditions as the main experiment, we investigated the impact of different weights on the final loss of LEGATO, denoted as $\mathcal{L}_{\text{total}} = \lambda_1 \mathcal{L}_{\text{u}} + \lambda_2 \mathcal{L}_{\text{TC}} + \lambda_3 \mathcal{L}_{\text{r}}$. The experimental results in Table \ref{tab:comparison:different:weights} demonstrate that 1) The forgetting capability remains stable under different weight combinations; 2) By adjusting the ratios of the various terms in the loss function, the model's performance can even be further improved. However, these phenomena do not exist in the GUIDE model, fully demonstrating the effectiveness of our model.

\subsection{Scaling Factors of Loss Functions}
In this section, we study the impact of \(\lambda_{\text{L2}}\), \(\lambda_{\text{id}}\) and \(\lambda_{\text{per}}\), as shown in Table \ref{tab:comparison:ablation:lambda:L2}, \ref{tab:comparison:ablation:lambda:ID} and \ref{tab:comparison:ablation:lambda:per}. Experimental results show:(1) Excessively large \(\lambda_{\text{L2}}\) and \(\lambda_{\text{id}}\) lead to poor generative performance on the retain set; (2) An excessively small \(\lambda_{\text{per}}\) also results in degraded generation quality on the retain set. Therefore, a smaller \(\lambda_{\text{L2}}\) and \(\lambda_{\text{id}}\), along with a larger \(\lambda_{\text{per}}\), is a better trade-off between the unlearning performance and the retention performance. In conclusion, the final choice about \(\lambda_{\text{L2}}\), \(\lambda_{\text{id}}\) and \(\lambda_{\text{per}}\) represents a relatively optimal balance.

\begin{table}[t!]
\centering
\begin{tabular}{ccccccc}
\toprule
\(\lambda_{\text{L2}}\) & ID & \( \mathrm{ID}_{\text{avg}} \) & \( \mathrm{FID}_{\text{pre}} \) & \( \Delta \mathrm{FID}_{\text{real}} \) \\
\midrule
\(10^{-2}\)  & 0.00   & 0.18  & 6.09   & 1.78 \\
\(10^{-1}\)  & -0.02  & 0.14  & 9.54   & 3.98 \\
1            & -0.03  & 0.10  & 24.28  & 15.44  \\
\bottomrule
\end{tabular}
\caption{Comparison of different \(\lambda_{\text{L2}}\) under multi-image test. We use CelebAHQ dataset in this study.}
\label{tab:comparison:ablation:lambda:L2}
\end{table}

\begin{table}[t!]
\centering
\begin{tabular}{ccccccc}
\toprule
\(\lambda_{\text{id}}\) & ID & \( \mathrm{ID}_{\text{avg}} \) & \( \mathrm{FID}_{\text{pre}} \) & \( \Delta \mathrm{FID}_{\text{real}} \) \\
\midrule
\(10^{-2}\)  & -0.01  & 0.17  & 6.74  & 2.08 \\
\(10^{-1}\)  & 0.00  & 0.18  & 6.09  & 1.78 \\
1            & -0.01  & 0.15  & 7.09  & 2.49  \\
\bottomrule
\end{tabular}
\caption{Comparison of different \(\lambda_{\text{id}}\) under multi-image test. We use CelebAHQ dataset in this study.}
\label{tab:comparison:ablation:lambda:ID}
\end{table}

\begin{table}[t!]
\centering
\begin{tabular}{ccccccc}
\toprule
\(\lambda_{\text{per}}\) & ID & \( \mathrm{ID}_{\text{avg}} \) & \( \mathrm{FID}_{\text{pre}} \) & \( \Delta \mathrm{FID}_{\text{real}} \) \\
\midrule
\(10^{-2}\)  & -0.01  & 0.16  & 6.86  & 2.26 \\
\(10^{-1}\)  & -0.01  & 0.17  & 6.55  & 1.99 \\
1            & 0.00  & 0.18  & 6.09  & 1.78  \\
\bottomrule
\end{tabular}
\caption{Comparison of different \(\lambda_{\text{per}}\) under multi-image test. We use CelebAHQ dataset in this study.}
\label{tab:comparison:ablation:lambda:per}
\end{table}

\begin{table}[t!]
\centering
\begin{tabular}{cccccc}
\toprule
Steps & Step size & ID & \( \mathrm{ID}_{\text{avg}} \) & \( \mathrm{FID}_{\text{pre}} \) & \( \Delta \mathrm{FID}_{\text{real}} \) \\
\midrule
1  & 1.60 & -0.02 & 0.16 & 6.73 & 2.04 \\
2  & 0.80 & -0.02 & 0.16 & 6.57 & 2.16 \\
4  & 0.40 & 0.00  & 0.18 & \textbf{6.09} & \textbf{1.78} \\
8  & 0.20 & -0.01 & 0.18 & 6.66 & 2.20 \\
\bottomrule
\end{tabular}
\caption{Comparison of fixed integration intervals in Neural ODEs under multi-image setting (CelebAHQ).}
\label{tab:comparison:fixed:integration}
\end{table}

\subsection{Steps of Solver in Neural ODE}
In this section, we investigate the impact of different numbers of steps (or step sizes) on forgetting and retention performance under a fixed integration interval (i.e., 1.6) in ODEs, as shown in Table \ref{tab:comparison:fixed:integration}. The results show that even with a fixed integration interval, varying the step size or steps leads to different outcomes, and a step size of 0.4 achieves a favorable balance.

\section{Extend to More Architectures}
Although current identity unlearning approaches are primarily based on GAN architectures, we conducted a theoretical comparison with diffusion- and flow-matching-based architectures to evaluate the scalability of our method \cite{Shaheryar2025Unlearn}, and further performed experimental validation on the latest flow-matching-based architecture. Theoretically, current diffusion-based unlearning methods all involve fine-tuning the entire U-Net architecture, whereas our Node Adaptor can be easily inserted after each block—similar to how it is applied in GAN-based architecture. Moreover, full fine-tuning incurs computational complexity that grows with the scale of the U-Net, leading to prohibitively high computational costs. In the main text, we have adapted diffusion-based methods to the GAN architecture for comparison.

\begin{table*}[t!]
\centering
\begin{tabular}{ccccccc}
\toprule
Method & Retention (MMD $\downarrow$) & Retention (Accuracy $\uparrow$) & Forgetting (Forget Rate $\downarrow$) & Forgetting (Leakage $\downarrow$) \\
\midrule
Retrain     & 1.02e-3 & 97.6 & 0.5  & 5.3e-3 \\
Unlearn     & 0.32    & 61.6 & 0.2  & 2.7e-2  \\
Unlearn+KL  & 0.04    & 96.3 & 0    & 6e-3  \\
LORA        & 5.96e-3 & 96.6 & 0.1  & 1.7e-2  \\
LEGATO      & \textbf{0} & \textbf{98.3} & \textbf{0}  & \textbf{3.1e-3}  \\
\bottomrule
\end{tabular}
\caption{Experimental results on flow-matching-based architecture (MNIST dataset).}
\label{tab:comparison:flow:matching}
\vspace{-5px}
\end{table*}

Table \ref{tab:comparison:flow:matching} shows that LEGATO achieves strong forgetting performance and retention capability in the latest flow-matching architecture on MNIST dataset. The reason our results outperform the gold-standard retrain may be attributed to the introduction of new parameters in our adaptor. We have also provided the implementation code in our Git repository.

\end{document}